\newcommand{\cmark}{\ding{51}}%
\newcommand{\xmark}{\ding{55}}%
\def\comment(#1){{\textcolor{blue}{\textbf{#1}}}}
\def\eql(#1,#2){{#1\!\!=\!#2}}
\newtheorem{thm}{Theorem}
\newtheorem{corollary}{Corollary}
\newcommand\shrink[1]{}
\def\n(#1){\bar{#1}}
\def\pr{{\it Pr}}
\def\C{{\bf C}}
\def\p{{\bf p}}
\def\U{{\bf U}}
\def\u{{\bf u}}
\def\X{{\bf X}}
\def\x{{\bf x}}
\def\Y{{\bf Y}}
\def\y{{\bf y}}
\def\Z{{\bf Z}}
\def\z{{\bf z}}
\def\eql(#1,#2){{#1\!\!=\!#2}}
\def\eql(#1,#2){{#1\!=\!#2}}
\def\clap#1{\hbox to 0pt{\hss#1\hss}}
\def\numpy{{\textsc NumPy}}
\newcommand\scalemath[2]{\scalebox{#1}{\mbox{\ensuremath{\displaystyle #2}}}}
\def\Sum{\sum^{\raisebox{-5pt}{$\scriptscriptstyle =$}}}
\def\eql(#1,#2){{#1\!\!=\!\!#2}}
\def\p{{P}}
\def\P{{P^\star}}
\def\NC{{\eta}}
\def\FF{{\cal F}}
\def\GG{{\cal G}}
\def\HH{{\cal H}}
\def\facs(#1){\FF_{#1}}
\def\facsp(#1){\accentset{\frown}{\FF}_{#1}}
\def\facsn(#1){\accentset{\smile}{\FF}_{#1}}
\def\sep{\mathtt{sep}}
\def\fvars{\mathtt{fvars}}
\def\fsum{\mathtt{fsum}}
\def\mes(#1,#2){{\mathscr{M}(#1,#2)}}
\begin{document}

\title{An Advance on Variable Elimination with Applications to Tensor-Based Computation\footnote{Will appear in proceedings 
of the European Conference on Artificial Intelligence (ECAI), Spain 2020.}}

\author{Adnan Darwiche\institute{University of California, Los Angeles,
email: darwiche@cs.ucla.edu}}

\maketitle
\bibliographystyle{ecai}

\begin{abstract}
We present new results on the classical algorithm of variable elimination, which underlies many algorithms including for probabilistic
inference. The results relate to exploiting functional dependencies, allowing one to perform inference efficiently on models 
that have very large treewidth. The highlight of the advance is that it works with standard (dense) factors, without the need for 
sparse factors or techniques based on knowledge compilation that are commonly utilized. 
This is significant as it permits a direct implementation of the improved variable elimination algorithm using tensors and their operations, 
leading to extremely efficient implementations especially when learning model parameters. 
We illustrate the efficacy of our proposed algorithm by compiling Bayesian network queries into tensor graphs and then 
learning their parameters from labeled data using a standard tool for tensor computation.
\end{abstract}

\section{Introduction}

The work reported in this paper is motivated by an interest in model-based supervised learning, in contrast
to model-free supervised learning that currently underlies most applications of neural networks. We briefly 
discuss this subject first to put the proposed work in context.

Supervised learning has become very influential recently and stands behind most real-world applications of AI. In supervised learning,
one learns a {\em function} from labeled data, a practice that is now dominated by the use of neural networks to represent such functions;
see~\cite{Goodfellow-et-al-2016,HintonOT06,BengioLPL06,RanzatoPCL06}.
Supervised learning can be applied in other contexts as well, such as causal models in the form of Bayesian networks~\cite{Pearl88b,pearl00b,pearl18}. 
In particular,  for each query on the causal model, one can {\em compile} an Arithmetic Circuit (AC) that maps evidence (inputs) to the posterior
probability of interest (output)~\cite{Darwiche03,Darwiche09}. AC parameters, which correspond to Bayesian network parameters, 
can then be learned from labeled data using gradient descent. Hence, like a neural network, the AC is a circuit that computes a 
function whose parameters can be learned from labeled data.

The use of ACs in this fashion can be viewed as {\em model-based} supervised learning, in contrast to {\em model-free} supervised learning 
using neural networks. Model-based supervised learning is attractive since the AC can integrate the {\em background knowledge} 
embecded in its underlying causal model. This has a number of advantages, which include a reduced reliance on data, improved
robustness and the ability to provide data-independent guarantees on the learned function. One important type of background 
knowledge is functional dependencies between variables and their direct causes in a model (a special case of what is known
as {\em determinism}). Not only can this type of knowledge
significantly reduce the reliance on data, but it can also significantly improve the complexity of inference. In fact, substantial efforts have been
dedicated to exploiting determinism in probabilistic inference,
particularly the compilation of Bayesian networks into ACs~\cite{Darwiche03,ChaviraDJ06,chavira.Darwiche.Aij.2008}, 
which is necessary for efficient inference on dense models.

There are two main approaches for exploiting functional dependencies. The first is based on the classical algorithm of variable
elimination (VE), which underlies algorithms for probabilistic inference including the {\em jointree} algorithm~\cite{zhangJAIR96a,dechterUAI96,JT_ALG}. 
VE represents a model using {\em factors,} which are tables or multi-dimensional arrays.
It then performs inference using a few and simple {\em factor operations.} 
Exploiting functional dependencies within VE requires {\em sparse factors}; see, e.g.,~\cite{Jensen90,LarkinD03}. 
The second approach for exploiting functional dependencies reduces probabilistic inference 
to {\em weighted model counting} on a propositional formula that encodes the model, including its functional 
dependencies. It then {\em compiles} the formula into a circuit that is tractable for model counting; 
see, e.g.,~\cite{Darwiche02,chavira.Darwiche.Aij.2008}. This approach is in common use today given the
efficacy of knowledge compilers.

Our main contribution is a new approach for exploiting functional dependencies in VE
that works with {\em standard (dense) factors.} This is significant for the following
reason. We wish to map probabilistic inference, particularly the learning of parameters, into a {\em tensor computation}
to exploit the vast progress on tensor-based technology and be on par with approaches that aggressively exploit this technology. 
Tensors are multi-dimensional arrays
whose operations are heavily optimized and can be extremely fast, even on CPU-based platforms like modern
laptops (let alone GPUs). A tensor computation takes the form of a {\em tensor graph} with nodes representing tensor operations.
Factors map directly to tensors and sparse factors to sparse tensors. However, sparse tensors
have limited support in state of the art tools, which prohibits an implementation of VE using sparse
tensors.\footnote{For example, in TensorFlow, a sparse tensor can only be multiplied by a dense tensor, which
rules out the operation of (sparse) factor multiplication that is essential for sparse VE; 
see \url{https://www.tensorflow.org/api_docs/python/tf/sparse/SparseTensor}.} 
Knowledge compilation approaches produce circuits that cast into {\em scalar
tensor graphs,} which are less effective than general tensor graphs as they are less amenable to parallelization.
Moreover, while our approach needs to know that there is a functional
dependency between variables it does not require the specific dependency (the specific numbers). 
Hence, it can be used to speed up inference even when the model parameters are unknown which can be
critical when learning model parameters from data. 
Neither of the previous approaches can exploit this kind of abstract information.

VE is based on two theorems that license factor operations. We add two new theorems
that license more operations in the presence of functional dependences. This leads to a standard VE
algorithm except with a significantly improved complexity and computation that maps directly to a tensor graph.
We present experimental results for inference and learning that show promise of the proposed algorithm. 

We start in Section~\ref{sec:factors} by discussing factors, their operations and the VE algorithm including its
underlying theorems. We also present our new VE theorems in this section. We then propose a new VE algorithm 
in Section~\ref{sec:alg} that exploits functional dependencies. 
We show how the proposed algorithm maps to tensor graphs and why this matter in Section~\ref{sec:tensors}.  
We follow by case studies in Section~\ref{sec:studies} that illustrate the algorithm's 
performance in the context of model-based supervised learning. We finally close with some remarks in 
Section~\ref{sec:conclusion}.

\section{The Fundamentals: Factors \& Operations}
\label{sec:factors}

The VE algorithm is based on applying operations to factors.

A {\em factor} for discrete variables \(\X\) is a function that maps each instantiation \(\x\) of 
variables \(\X\) into a number. The following are two factors over binary variables
\(A, B, C\) and ternary variable \(D\):
\begin{center}
\scalebox{0.7}{
\(
\begin{array}{cc|c}
A & D & f(AD)\\ \hline
0 & 0 & 0.2 \\
0 & 1 & 0.3 \\
0 & 2 & 0.6 \\
1 & 0 & 0.9 \\
1 & 1 & 0.6 \\
1 & 2 & 0.1 
\end{array}
\)
\hspace{5mm}
\(
\begin{array}{ccc|c}
A & B & C & g(ABC)\\ \hline
0 & 0 & 0 & 1.0 \\
0 & 0 & 1 & 0.0 \\
0 & 1 & 0 & 0.0 \\
0 & 1 & 1 & 1.0 \\
1 & 0 & 0 & 0.2 \\
1 & 0 & 1 & 0.8 \\
1 & 1 & 0 & 0.5 \\
1 & 1 & 1 & 0.5 
\end{array}
\)
}
\end{center}
Factors can be represented as multi-dimensional arrays and are now commonly
referred to as {\em tensors} (factor variables corresponds to array/tensor dimensions). 
One needs three factor operations to implement the VE algorithm: multiplication, sum-out and normalization.

The {\em product} of factors \(f(\X)\) and \(g(\Y)\) is another factor \(h(\Z)\),
where \(\Z = \X \cup \Y\) and \(h(\z) = f(\x)g(\y)\) for the unique instantiations \(\x\) and \(\y\)
that are compatible with instantiation~\(\z\). 
{\em Summing-out} variables \(\Y \subseteq \X\) from factor \(f(\X)\) yields another factor \(g(\Z)\),
where \(\Z= \X \setminus \Y\) and \(g(\z) = \sum_\y f(\y\z)\). 
We use \(\sum_\Y f\) to denote the resulting factor \(g\). We also use \(\Sum_\Z f\) which reads:
sum out all variables from factor \(f\) except for variables \(\Z\).
{\em Normalizing} factor \(f(\X)\) yields another factor \(g(\X)\) where \(g(\x) = f(\x)/\sum_\x f(\x)\). 
We use \(\NC f\) to denote the normalization of factor \(f\).
 
A {\em Bayesian Network (BN)} is specified by a directed acyclic graph (DAG) and a set of 
factors. In particular, for each node \(X\) and its parents \(\U\), we need a factor \(f_X\) over variables \(X\U\). 
The value \(f_X(x\u)\) represents the conditional probability \(\p(x|\u)\) and the factor \(f_X\) is called 
a {\em Conditional Probability Table (CPT).} The {\em joint distribution} specified by a Bayesian network 
is simply the product of its CPTs.

The Bayesian network in Figure~\ref{fig:bn-vw}  has five CPTs \(f_A(A)\), \(f_B(AB)\), \(f_C(AC)\), \(f_D(BCD)\) and 
\(f_E(CE)\). The network joint distribution is the product of these factors \(\pr(ABCDE) = f_A f_B f_C f_D f_E\).

Evidence on variable \(X\) is captured by a factor \(\lambda_X(X)\) called an 
{\em evidence indicator.} 
{\em Hard evidence} fixes a value \(x\) giving \(\lambda_X(x)=1\) 
and \(\lambda_X(x^\star)=0\) for  \(x^\star \neq x\). For {\em soft evidence,} 
\(\lambda_X(x)\) is the {\em likelihood} of \(x\)~\cite{Pearl88b}. 
The {\em posterior distribution} of a Bayesian network is the normalized product of its
CPTs and evidence indicators.

An expression constructed by applying operations to factors will be called an {\em f-expression.}
Suppose we have evidence on variables \(A\) and \(E\) in Figure~\ref{fig:bn-vw}. 
The posterior on variable \(D\) is obtained by evaluating the following f-expression:
\[
\P(D) = \NC \sum_{ABCE} \lambda_A \lambda_E f_A f_B f_C f_D f_E.
\]
The VE algorithm factors f-expressions so they are evaluated more 
efficiently~\cite{zhangJAIR96a,dechterUAI96} and is based on two theorems; see, e.g., \cite[Chapter 6]{Darwiche09}.
The first theorem says that the order in which variables are summed out does not matter.

\begin{thm}\label{theo:ve0}
\(\sum_{\X\Y} f = \sum_\X \sum_\Y f = \sum_\Y \sum_\X f\).
\end{thm}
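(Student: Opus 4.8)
The plan is to prove the two equalities pointwise, reducing the factor identity to the commutativity and associativity of ordinary addition over a finite index set. All three expressions in the statement are factors whose scope is \(\W = \vars(f) \setminus (\X \cup \Y)\) (I take \(\X\) and \(\Y\) to be disjoint, as is implicit in the notation \(\sum_{\X\Y}\)), so it suffices to show that the three resulting factors agree on every instantiation \(\w\) of \(\W\). Confirming that all three share this same scope is the one bookkeeping point that must be settled up front, since only then is a pointwise comparison well-posed.

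First I would unfold the definition of sum-out on each side. For the left-hand side, the definition gives \((\sum_{\X\Y} f)(\w) = \sum_{\v} f(\v\w)\), where \(\v\) ranges over all instantiations of \(\X \cup \Y\). For the first nested expression, two successive applications of the definition give \((\sum_{\X} \sum_{\Y} f)(\w) = \sum_{\x} \sum_{\y} f(\x\y\w)\), with \(\x\) and \(\y\) ranging over instantiations of \(\X\) and \(\Y\) respectively.

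The key step is the bijection between instantiations \(\v\) of \(\X \cup \Y\) and compatible pairs \((\x,\y)\) of instantiations of the disjoint sets \(\X\) and \(\Y\): every \(\v\) decomposes uniquely into a pair \(\x\y\), and conversely. Under this correspondence \(f(\v\w) = f(\x\y\w)\), so the single sum over \(\v\) and the iterated sum over \((\x,\y)\) have exactly the same terms. Since addition of reals is associative and commutative, regrouping a finite sum into an iterated one leaves its value unchanged; this yields \((\sum_{\X\Y} f)(\w) = (\sum_{\X} \sum_{\Y} f)(\w)\).

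Finally, the third expression follows by symmetry: interchanging the roles of \(\X\) and \(\Y\) in the argument above shows \((\sum_{\Y} \sum_{\X} f)(\w) = \sum_{\v} f(\v\w)\) as well, the only difference being that the iterated sum is grouped as \(\sum_{\y}\sum_{\x}\), which commutativity again reconciles with the other order. As \(\w\) was arbitrary, the three factors coincide. I do not anticipate a genuine obstacle: the whole content of the theorem is the reindexing in the key step, and the finiteness of every variable's domain guarantees that the rearrangement of summands is valid without any convergence concerns.
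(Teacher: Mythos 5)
Your proof is correct. The paper itself does not prove this theorem; it states it as a known foundation of variable elimination and defers to a textbook reference (Chapter 6 of Darwiche's book), so there is no in-paper argument to compare against. Your argument — fixing the common scope \(\W = \vars(f) \setminus (\X \cup \Y)\), unfolding the definition of sum-out pointwise, and invoking the bijection between instantiations of \(\X \cup \Y\) and compatible pairs \((\x, \y)\) together with commutativity and associativity of finite real addition — is the standard proof and is sound, including your explicit handling of the disjointness of \(\X\) and \(\Y\) implicit in the notation \(\sum_{\X\Y}\).
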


The second theorem allows us to reduce the size of factors involved in a multiplication operation.

\begin{thm}\label{theo:ve1}
If variables \(\X\) appear in factor \(f\) but not in factor \(g\), then \(\sum_\X f \cdot g = g \sum_\X f\).
\end{thm}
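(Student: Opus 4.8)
The plan is to prove the identity pointwise: I would show that the two f-expressions are factors over the same set of variables and that they agree on every instantiation of that set. First I would pin down the domains. Write $f$ as a factor over variables $\V$ with $\X \subseteq \V$, and write $g$ as a factor over variables $\Y$, where the hypothesis ``\(\X\) appear in $f$ but not in $g$'' means precisely $\X \subseteq \V$ and $\X \cap \Y = \emptyset$. Set $\W = \V \setminus \X$. The left-hand side $\sum_\X (f \cdot g)$ first forms the product $f \cdot g$ over $\V \cup \Y$ and then sums out $\X$, yielding a factor over $\W \cup \Y$. The right-hand side $g \cdot \sum_\X f$ forms $\sum_\X f$ over $\W$ and multiplies by $g$ over $\Y$, again yielding a factor over $\W \cup \Y$. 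So both sides share the same domain, and it suffices to compare their values.

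Next I would fix an arbitrary instantiation $\w\y$ of $\W \cup \Y$ and expand the left side using the definitions of sum-out and product given earlier. By the sum-out definition the left value is $\sum_\x (f \cdot g)(\x\w\y)$, and by the product definition $(f \cdot g)(\x\w\y) = f(\x\w)\, g(\y)$, since $\x\w$ is the unique instantiation of $\V$ compatible with $\x\w\y$ and $\y$ is the unique instantiation of $\Y$ compatible with it. The crucial step, and the only place the hypothesis is used, is that $g(\y)$ does not depend on the summation variables $\X$ because $\X$ do not occur in $g$; hence $g(\y)$ is constant with respect to the sum over $\x$ and factors out: $\sum_\x f(\x\w)\, g(\y) = g(\y) \sum_\x f(\x\w)$, which is just the distributive law for real numbers. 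The remaining sum $\sum_\x f(\x\w)$ is by definition the value $(\sum_\X f)(\w)$, so the expression equals $g(\y)\,(\sum_\X f)(\w)$, which is exactly the value $(g \cdot \sum_\X f)(\w\y)$ of the right side. As $\w\y$ was arbitrary, the two factors coincide.

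I expect no genuine obstacle, since the mathematical content collapses to distributivity of multiplication over addition. The only thing demanding care is the bookkeeping of domains and compatible instantiations. In particular, $g$ may share some of its variables with the non-$\X$ variables $\W$ of $f$, but those shared variables are held fixed by $\w$ throughout the summation over $\x$, so $g(\y)$ remains constant under the sum and the argument goes through unchanged.
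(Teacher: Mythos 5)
Your proof is correct: the paper states this theorem without proof (deferring to the cited textbook treatment), and your argument is exactly the standard one --- both sides are factors over \(\W \cup \Y\), and pointwise the identity reduces to pulling the constant \(g(\y)\) out of the sum over \(\x\) by distributivity. Your closing remark about variables shared between \(g\) and \(\W\) being held fixed by \(\w\) correctly handles the only bookkeeping subtlety.
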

Factor \(\sum_\X f\) is exponentially smaller than factor \(f\) so Theorem~\ref{theo:ve1} allows us to
evaluate the f-expression \(\sum_\X f \cdot g\) much more efficiently.  

Consider the f-expression \(\sum_{ABDE} f(ACE) f(BCD)\). A direct evaluation
multiplies the two factors to yield \(f(ABCDE)\) then sums out variables \(ABDE\). Using
Theorem~\ref{theo:ve0}, we can arrange the expression into \(\sum_{AE} \sum_{BD} f(ACE) f(BCD)\)
and using Theorem~\ref{theo:ve1} into \(\sum_{AE} f(ACE) \sum_{BD} f(BCD)\), which is
more efficient to evaluate. 

Using an appropriate order for summing out (eliminating) variables, 
Theorems~\ref{theo:ve0} and~\ref{theo:ve1} allow one to compute the posterior on any variable 
in a Bayesian network in \(O(n \exp(w))\) time and space. Here, \(n\) is the number of 
network variables and \(w\) is the network treewidth (a graph-theoretic  measure of the 
network connectivity).

This works well for sparse networks that have a small treewidth, but is problematic for dense 
networks like the ones we will look at in Section~\ref{sec:studies}. We present next two
new results that allow us to sometimes significantly improve this computational complexity,
by exploiting functional relationships between variables and their direct causes. 

While we will focus on exploiting functional dependencies in Bayesian networks, our results are more broadly applicable
since the VE algorithm can be utilized in many other domains
including symbolic reasoning and constraint processing~\cite{RinaCSPBook}. VE can
also be used to {\em contract} tensor networks which have been receiving increased attention.
A {\em tensor network} is a set of factors in which a variable appears in at most
two factors. Contracting a tensor network is the problem of summing
out all variables that appear in two factors; see, e.g.,~\cite{TensorOrder,ContractionTrees}. 
The VE algorithm can also be used to evaluate Einstein summations which are in common use today and
implemented in many tools including \numpy.\footnote{\url{https://numpy.org/}}

\subsection{Functional CPTs}

Consider variable \(X\) that has parents \(\U\) in a Bayesian network and let factor \(f_X(X\U)\)
be its conditional probability table (CPT).\footnote{Since \(\sum_x \p(x|\u) = 1\) the CPT satisfies
\(\sum_x f_X(x\u) = 1\) for every \(\u\).}
If \(f(x\u) \in \{0,1\}\) for all instantiations \(x\) and \(\u\), the CPT is said to be {\em functional} 
as it specifies a function that maps parent instantiation \(\u\) into the unique 
value \(x\) satisfying \(f_X(x\u)=1\). The following CPTs are functional:
\begin{center}
\scalebox{0.7}{
\(
\begin{array}{cc|c}
X & Y & f_Y(XY)\\ \hline
x_0 & y_0 & 0 \\
x_0 & y_1 & 1 \\
x_1 & y_0 & 1 \\
x_1 & y_1 & 0 
\end{array}
\)
\hspace{5mm}
\(
\begin{array}{cc|c}
A & B & f_B(AB)\\ \hline
a_0 & b_0 & 0 \\
a_0 & b_1 & 1 \\
a_0 & b_2 & 0 \\
a_1 & b_0 & 0 \\
a_1 & b_1 & 0 \\
a_1 & b_2 & 1 
\end{array}
\)
}
\end{center}
The first specifies the function \(x_0 \mapsto y_1\), \(x_1 \mapsto y_0\).
The second specifies the function \(a_0 \mapsto b_1\), \(a_1 \mapsto b_2\).
Functional dependencies encode a common type of background knowledge (examples 
in Section~\ref{sec:studies}). They are a special type of {\em determinism,} which 
generally refers to the presence of zero parameters in a CPT. A CPT that has zero
parameters is not necessarily a functional CPT.

We will next present two results that empower the VE algorithm in the presence of functional CPTs.
The results allow us to factor f-expressions beyond what is permitted by 
Theorems~\ref{theo:ve0} and~\ref{theo:ve1}, leading to significant reduction in complexity. 
{\em The results do not depend on the identity of a functional CPT, only that it is functional.}
This is significant when learning model parameters from data.

To state these results, we will use \(\FF\), \(\GG\) and \(\HH\) to denote sets of factors. 
Depending on the context, a set of factors \(\FF\) may be treated as one factor obtained by 
multiplying members of the set \(\prod_{f \in \FF} f\).

The first result says the following. If a functional CPT for variable \(X\) appears in both parts of a product, then
variable \(X\) can be summed out from one part without changing the value of the product.

\begin{thm}\label{theo:ve2}
Consider a functional CPT \(f\) for variable \(X\). If  \(f \in \GG\) and \(f \in \HH\), then
\(\GG \cdot \HH = \GG \sum_X \HH\). 
\end{thm}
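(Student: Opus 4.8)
The plan is to verify the identity pointwise, evaluating both sides at an arbitrary instantiation of the variables and exploiting the two defining features of a functional CPT: as an indicator it is idempotent, $f \cdot f = f$, and for each instantiation $\u$ of the parents $\U$ of $X$ it selects a unique value $x^\star(\u)$ with $f(x^\star(\u)\,\u) = 1$.

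First I would fix the variable sets. Since $f \in \GG$, variable $X$ already occurs in $\GG$, so summing $X$ out of $\HH$ does not remove it from the product; both $\GG \cdot \HH$ and $\GG \sum_X \HH$ are therefore factors over the same variables. I would then write $\GG = f \cdot g$ and $\HH = f \cdot h$, where $g$ and $h$ collect the remaining members of $\GG$ and $\HH$ (each of $g, h$ may still mention $X$). Because $X$ is not summed out of $\GG$ on either side, the contribution of $f \cdot g$ is identical on both sides; the whole argument thus reduces to what happens inside $\HH$.

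Fixing an instantiation and letting $x, \u$ be the values it assigns to $X$ and $\U$, I would evaluate each side. On the left the two copies of $f$ collapse by idempotence, $f(x\u)^2 = f(x\u)$, leaving $f(x\u)\,g\,h(x,\cdot)$. On the right I would compute $\sum_X \HH = \sum_{x'} f(x'\u)\,h(x',\cdot)$; since $f$ is functional, $f(x'\u)$ is nonzero for the single value $x' = x^\star(\u)$, so the sum collapses to $h(x^\star(\u),\cdot)$, and multiplying by $f \cdot g$ gives $f(x\u)\,g\,h(x^\star(\u),\cdot)$.

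Finally I would compare the two results, which differ only in whether $h$ is read at $X = x$ or at $X = x^\star(\u)$. If $x = x^\star(\u)$ the two readings coincide, so the sides are equal. If $x \neq x^\star(\u)$ then $f(x\u) = 0$, and since the surviving factor $f$ multiplies both products they both vanish irrespective of how $h$ is read. Hence equality holds at every instantiation. I expect the one delicate step to be the middle one: $h$ cannot be pulled out of $\sum_X$ because it may depend on $X$, so the proof must use the indicator nature of $f$ to collapse the sum to a single term and then rely on the copy of $f$ remaining in $\GG$ to neutralise precisely the instantiations where the two readings of $h$ disagree.
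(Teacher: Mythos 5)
Your proof is correct and follows essentially the same route as the paper's: evaluate both sides at an arbitrary instantiation, use the indicator property of $f$ inside $\HH$ to collapse $\sum_X \HH$ to the single term at $x^\star(\u)$, and then use the copy of $f$ surviving in $\GG$ to kill the instantiations where $x \neq x^\star(\u)$. The only cosmetic difference is that you explicitly factor $f$ out of each set and invoke idempotence $f^2=f$, whereas the paper keeps $\GG$ and $\HH$ as whole products and argues that the offending terms of the sum vanish because $f \in \HH$; the substance is identical.
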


\begin{proof}
Suppose CPT \(f\) is over variables \(X\U\).
Let \(h(\X)\) and \(g(\Y)\) be the factors corresponding to \(\HH\) and \(\GG\), respectively.
Let \(\Z = \X \cup \Y\) and \(\X^\star = \X \setminus \{X\}\). Then variables \(X\U\) must belong
to \(\X\), \(\Y\) and \(\Z\), and parents \(\U\) must belong to \(\X^\star\). 
Let \(e_l = \GG \cdot \HH\) and \(e_r = \GG \sum_X \HH\).
We want to show \(e_l(\z) = e_r(\z)\) for every instantiation \(\z\).

Consider an instantiation \(\z\) and let \(\u\), \(\x^\star\), \(\x\) and \(\y\) be the instantiations of \(\U\),  \(\X^\star\), \(\X\) and \(\Y\) in \(\z\).
Then \(e_l(\z) = g(\y)h(\x)\) and \(e_r(\z) = g(\y) \sum_x h(x\x^\star)\). 
Since CPT \(f\) is functional, \(f(x\u) \in \{0,1\}\) for any \(x\) and there is a unique \(x\), call it \(x_\u\),
such that \(f(x\u) = 1\).

If \(f(x\u)=0\), then \(h(x\x^\star)=0\) since \(f \in \HH\), leading to
\[
\scalemath{0.90}{
e_r(\z) = g(\y) \sum_x h(x\x^\star) = g(\y) \sum_{\substack{x \\ f(x\u)=1}} h(x\x^\star) = g(\y) h(x_\u \x^\star).
}
\]

If \(x_\u\) is the instantiation of \(X\) in \(\z\), then \(x_\u \x^\star = \x\) and \(e_r(\z) = g(\y) h(\x) = e_l(\z)\).
Otherwise, \(g(\y) = 0\) since \(f \in \GG\), which leads to \(e_l(\z) = e_r(\z) = 0\). Hence, \(e_l(\z) = e_r(\z)\) for every
instantiation \(\z\) and we have \(\GG \cdot \HH = \GG \sum_X \HH\). 
\end{proof}

Theorem~\ref{theo:ve2} has a key corollary.
If a functional CPT for variable \(X\) appears in both parts of a product,
we can sum out variable \(X\) from the product by independently summing it out
from each part.
\begin{corollary}\label{coro:ve2}
Consider a functional CPT \(f\) for variable \(X\). If  \(f \in \GG\) and \(f \in \HH\), then
\(\sum_X \GG \cdot \HH = \left(\sum_X \GG\right) \left(\sum_X \HH\right)\). 
\end{corollary}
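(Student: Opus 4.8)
The plan is to derive the corollary directly from Theorem~\ref{theo:ve2} together with Theorem~\ref{theo:ve1}, rather than revisiting the instantiation-level argument. First I would invoke the identity \(\GG \cdot \HH = \GG \sum_X \HH\) supplied by Theorem~\ref{theo:ve2}, which applies here precisely because \(f \in \GG\) and \(f \in \HH\). Applying \(\sum_X\) to both sides then gives \(\sum_X \GG \cdot \HH = \sum_X \bigl(\GG \sum_X \HH\bigr)\), so the whole task reduces to simplifying this right-hand side.

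The key observation is that the factor \(\sum_X \HH\) no longer mentions variable \(X\), since \(X\) has been eliminated from it. Consequently Theorem~\ref{theo:ve1}, with the roles of the two factors played by \(\GG\) and \(\sum_X \HH\), licenses pulling \(\sum_X \HH\) outside the remaining summation over \(X\): \(\sum_X \bigl(\GG \sum_X \HH\bigr) = \bigl(\sum_X \HH\bigr)\bigl(\sum_X \GG\bigr)\). Reordering the two factors in the product yields \(\bigl(\sum_X \GG\bigr)\bigl(\sum_X \HH\bigr)\), which is exactly the claimed right-hand side, closing the argument in two short steps.

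I do not expect a genuine obstacle, since all the hard content is already isolated in Theorem~\ref{theo:ve2}; the corollary is essentially a one-line consequence. The only point that needs care is verifying the applicability of Theorem~\ref{theo:ve1}, namely that \(X\) is indeed absent from \(\sum_X \HH\) (immediate, as it was just summed out) so that this factor behaves as a constant with respect to the outer elimination of \(X\). As a sanity check, one can reach the same conclusion symmetrically by first rewriting \(\GG \cdot \HH = \bigl(\sum_X \GG\bigr)\HH\) and then extracting \(\sum_X \GG\) from under the outer sum; both routes terminate in the same two applications.
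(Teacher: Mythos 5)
Your proposal is correct and matches the paper's own proof essentially verbatim: both first rewrite \(\GG \cdot \HH\) as \(\GG \sum_X \HH\) via Theorem~\ref{theo:ve2}, then pull \(\sum_X \HH\) out of the remaining sum over \(X\) via Theorem~\ref{theo:ve1}. Your added remark about verifying that \(X\) is absent from \(\sum_X \HH\) is the right applicability check, and the symmetric variant you mention is equally valid.
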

\begin{proof}
\(\sum_X \GG \cdot \HH = \sum_X \left(\GG \sum_X \HH\right)\) by Theorem~\ref{theo:ve2},
which equals \(\left(\sum_X \HH\right) \left(\sum_X \GG\right)\) by Theorem~\ref{theo:ve1}.
\end{proof}
Theorem~\ref{theo:ve2} and Corollary~\ref{coro:ve2} may appear unusable as they
require multiple occurrences of a functional CPT whereas the factors of a Bayesian network 
contain a single (functional) CPT for each variable. This is where the second result comes in:
{\em duplicating} a functional CPT in a product of factors does not change the product value. 

\begin{thm}\label{theo:ve3}
For functional CPT \(f\), if \(f \in \GG\), then  \(f \cdot \GG=\GG\).
\end{thm}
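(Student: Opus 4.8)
The plan is to exploit the fact that a functional CPT is a \(\{0,1\}\)-valued factor and is therefore idempotent under multiplication. Since \(\GG\) is read as the single factor \(\prod_{g \in \GG} g\) and \(f\) is one of its members, I would first peel off this occurrence and write \(\GG = f \cdot \GG'\), where \(\GG' = \GG \setminus \{f\}\) denotes the product of the remaining factors. The target identity \(f \cdot \GG = \GG\) then becomes \(f \cdot f \cdot \GG' = f \cdot \GG'\), so the whole statement reduces to showing that \(f \cdot f = f\) as factors.

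To establish \(f \cdot f = f\), I would argue pointwise. Let \(f\) be over variables \(X\U\) and let \(\GG\), read as a factor, be over variables \(\Y\); because \(f \in \GG\), we have \(X\U \subseteq \Y\), so both \(f \cdot \GG\) and \(\GG\) are factors over \(\Y\) and can be compared instantiation by instantiation. Fixing an instantiation \(\y\) of \(\Y\) and letting \(x\u\) be its restriction to \(X\U\), factor multiplication gives \((f\cdot\GG)(\y) = f(x\u)\,\GG(\y)\), and since \(\GG = f \cdot \GG'\) this equals \(f(x\u)^2\,\GG'(\y)\). Functionality of the CPT means \(f(x\u) \in \{0,1\}\), whence \(f(x\u)^2 = f(x\u)\); therefore \((f\cdot\GG)(\y) = f(x\u)\,\GG'(\y) = \GG(\y)\).

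The only point worth flagging is a bookkeeping matter rather than a genuine obstacle: one must confirm that multiplying by \(f\) does not enlarge the variable scope of the resulting factor, which holds precisely because \(f\)'s variables \(X\U\) already appear in \(\GG\). I would also note that, unlike Theorem~\ref{theo:ve2}, this argument never invokes the existence of a \emph{unique} satisfying value \(x_\u\); it relies solely on \(f\) being \(\{0,1\}\)-valued, i.e.\ idempotent. In that sense this is the simplest of the three VE theorems, and its role is purely to manufacture the duplicate occurrence of \(f\) needed to apply Theorem~\ref{theo:ve2} and Corollary~\ref{coro:ve2}.
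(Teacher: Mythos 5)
Your proof is correct and is essentially the paper's own argument in a slightly more algebraic dress: the paper also works pointwise, splitting on \(f(x\u)=0\) versus \(f(x\u)=1\) and using \(f\in\GG\) to force \(\GG(\z)=0\) in the first case, which is exactly your observation that \(f(x\u)^2=f(x\u)\) after writing \(\GG=f\cdot\GG'\). Your closing remark that only \(\{0,1\}\)-valuedness (not uniqueness of \(x_\u\)) is needed is accurate and consistent with the paper's comment that the theorem extends to any functional dependency implied by \(\GG\).
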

\begin{proof}
Let \(g(\Z)\) be the product of factors in \(\GG\) and let \(h = f \cdot g\).
Suppose factor \(f\) is the CPT of variable \(X\) and parents \(\U\). 
Consider an instantiation \(\z\) and suppose it includes instantiation \(x\u\). 
If \(f(x\u)=0\), then \(g(\z)=0\) since \(f \in \GG\). Moreover, \(h(\z) = f(x\u) g(\z) = 0\).
If \(f(x\u)=1\), then \(h(\z) = f(x\u) g(\z) = g(\z)\). Hence, \(g(\z) = h(\z)\) for all instantiations \(\z\)
and we have \(\GG= f \cdot \GG\).
\end{proof}
Theorem~\ref{theo:ve3} holds if \(f\) embeds any functional dependency that is implied 
by factors \(\GG\) instead of being a functional CPT in \(\GG\) but we do not pursue the applications
of this generalization in this paper.

To see how Theorems~\ref{theo:ve2} and~\ref{theo:ve3} interplay, 
consider the f-expression \(\sum_X f(XY) g(XZ) h(XW)\). In the standard VE algorithm,
one must multiply all three factors before summing out variable \(X\), leading to a factor over four variables \(XYZW\).
If factor \(f\) is a functional CPT for variable \(X\), we can duplicate it by Theorem~\ref{theo:ve3}:
\(f(XY) g(XZ) h(XW) =  f(XY) g(XZ) f(XY) h(XW)\). Moreover, Corollary~\ref{coro:ve2} gives
\(\sum_X f(XY) g(XZ)f(XY) h(XW) = \sum_X f(XY) g(XZ) \sum_X f(XY) h(XW)\), which
avoids constructing a factor over four variables. We show in Section~\ref{sec:alg}
how these theorems enable efficient inference on models with very large treewidth. 

\section{Variable Elimination with Functional CPTs}
\label{sec:alg}

\def\myfigscale{0.16}
\begin{figure}[tb]
\centering
\includegraphics[scale=\myfigscale]{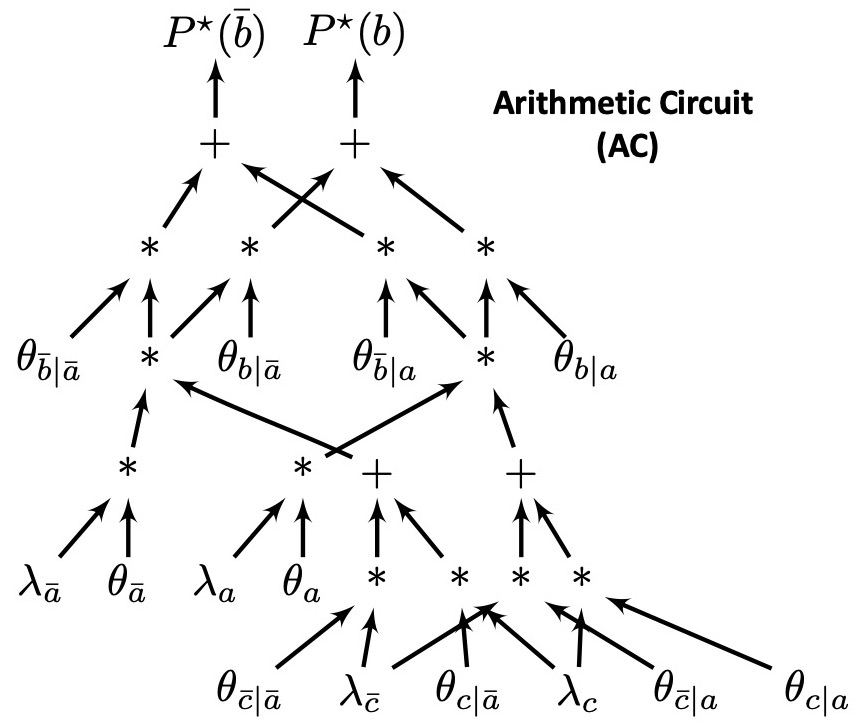}
\caption{An arithmetic circuit (AC) compiled from the Bayesian network \(A \rightarrow B\), \(A \rightarrow C\). 
The AC computes factor \(f(B)\), where \(\NC f\) is the posterior on variable \(B\) 
given evidence on variables \(A\) and \(C\).
\label{fig:ac}}
\end{figure}

We now present our proposed VE algorithm.
We first present a standard VE algorithm based on {\em jointrees}~\cite{JT_ALG}
and then extend it to exploit functional CPTs. 
Our algorithm will not compute probabilities, but will compile {\em symbolic f-expressions} whose
factors contain {\em symbolic parameters.} A symbolic f-expression is compiled once and used thereafter
to answer multiple queries. Moreover, its parameters can be learned
from labeled data using gradient descent. We will show how to map symbolic f-expressions into 
tensor graphs in Section~\ref{sec:tensors} and use these graphs for supervised learning in Section~\ref{sec:studies}.

Once the factors of a symbolic f-expression are unfolded, the result is an Arithmetic Circuits (ACs)~\cite{Darwiche03,ChoiDarwiche17}
as shown in Figure~\ref{fig:ac}.
In fact, the standard VE algorithm we present next is a refinement on the one proposed 
in \cite{Darwiche03} for extracting ACs from jointrees.

The next section introduces jointrees and some key concepts that we need for the standard
and extended VE algorithms.

\subsection{Jointrees}

\def\myfigscale{0.20}
\begin{figure}[tb]
\centering
 \includegraphics[scale=\myfigscale]{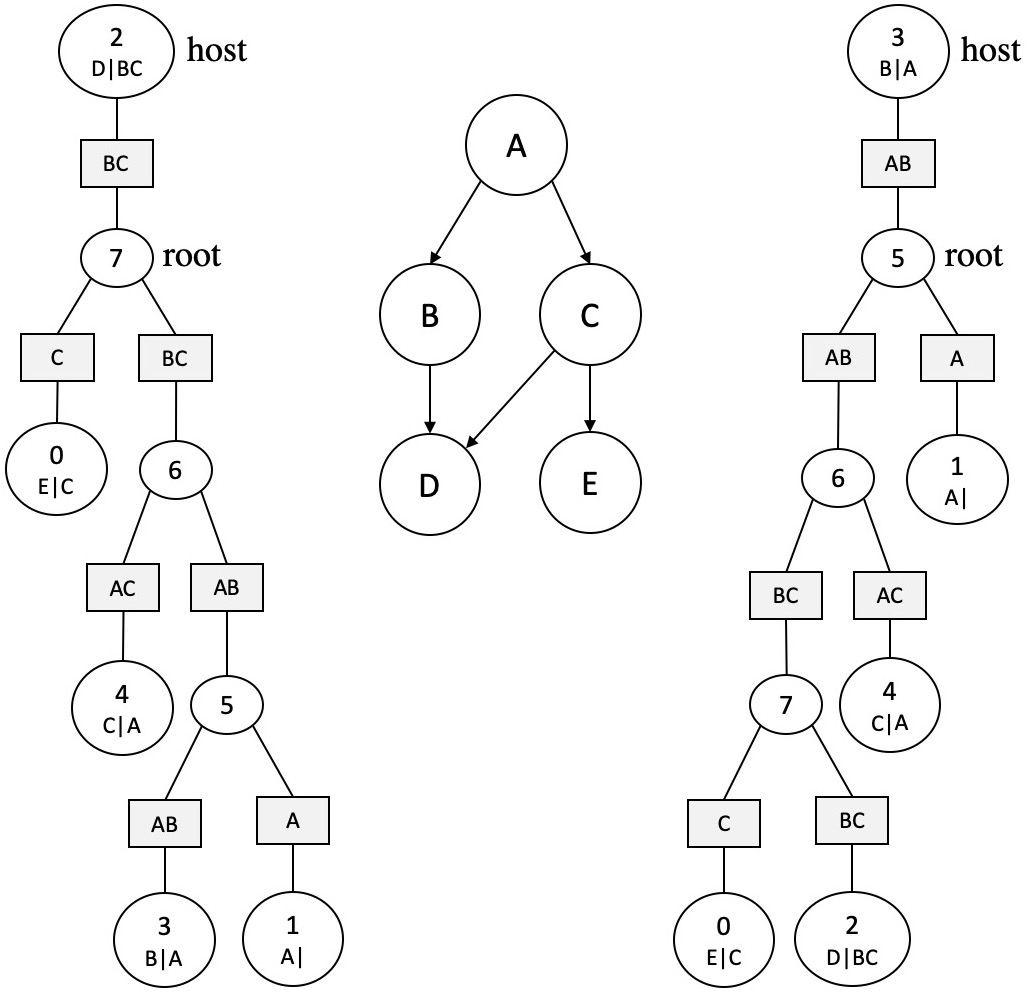}
 \caption{A Bayesian network with a jointree (two views).
 \label{fig:bn-vw}}
\end{figure}

Consider the Bayesian network in the middle of Figure~\ref{fig:bn-vw} and its jointree on the left
of the figure. The jointree is simply a tree with factors attached to some of 
its nodes (the circles in Figure~\ref{fig:bn-vw} are the jointree nodes). 
We use {\em binary jointrees}~\cite{DBLP:conf/uai/Shenoy96}, in which each node has either one or three 
neighbors and where nodes with a single neighbor are called {\em leaves.} 
The two jointrees in Figure~\ref{fig:bn-vw} are identical but arranged differently. 
The one on the left has leaf node~\(2\) at the top and the one on the right has leaf node~\(3\)
at the top.

Our use of jointrees deviates from normal for reasons that become apparent later. First, we use a binary jointree
whose leaves are in one-to-one correspondence with model variables. 
Second, we only attach factors to leaf nodes: The CPT and evidence indicator for each variable \(X\) 
are assigned to the leaf node \(i\) corresponding to variable \(X\). Leaf jointree node \(i\) is 
called the {\em host} of variable \(X\) in this case.\footnote{For similar uses and a method for
constructing such binary jointrees, see~\cite{DarwicheRC} and~\cite[Chapter 8]{Darwiche09}.
{\em Contraction trees} which were adopted later for contracting tensor networks~\cite{ContractionTrees}
correspond to binary jointrees.}

The Bayesian network in Figure~\ref{fig:bn-vw} has five variables. Its jointree also has five leaves,
each of which hosts a network variable. For example, jointree node~\(2\)
at the top-left hosts variable \(D\): the CPT and evidence indicator for
variable \(D\) are assigned to this jointree node.

A key notion underlying jointrees are {\em edge separators} which determine the space complexity 
of inference (the rectangles in Figure~\ref{fig:bn-vw} are
separators). The separator for edge \((i,j)\), denoted \(\sep(i,j)\), are model variables that appear 
in leaf nodes on both sides of the edge. For example, \(\sep(6,7) = \{B,C\}\) as these are the 
model variables that appear in jointree leaves \(\{0,2\}\) and \(\{1,3,4\}\). A related notion
is the {\em cluster} of jointree node \(i\). If \(i\) is leaf, its cluster are the variables appearing at node \(i\).
Otherwise, it is the union of separators for edges \((i,j)\). Every factor constructed by VE
is over the variables of some separator or cluster. The time complexity of VE
is exponential in the size of clusters and linear in the number of nodes in a jointree.

The size of largest cluster \(-1\) is called the {\em jointree width} and cannot be lower than 
the Bayesian network treewidth; see \cite[Chapter 9]{Darwiche09} for a detailed treatment of this subject. 
When the network contains variables with different cardinalities, the size of a cluster is better
measured by the number of instantiations that its variables has. We therefore define the 
{\em binary rank} of a cluster as {\em log2} of its instantiation count. The binary rank 
coincides with the number of variables in a cluster when all variables are binary.

Our technique for exploiting functional dependencies will 
use Theorems~\ref{theo:ve2} and~\ref{theo:ve3} to shrink the size of clusters and separators
significantly below jointree width, allowing us to handle networks with very large treewidth.
The algorithm will basically reduce the maximum binary rank of clusters and separators,
which can exponentially reduce the size of factors constructed by VE during inference.

\subsection{Compiling Symbolic f-expressions using VE}
Suppose we wish to compile an f-expression that computes the posterior on variable \(Q\).
We first identify the leaf jointree node \(h\) that hosts variable \(Q\).
We then arrange the jointree so {\em host} \(h\) is at the top as in Figure~\ref{fig:bn-vw}.
Host \(h\) will then have a single child \(r\) which we call the jointree {\em root.}
The tree rooted at node \(r\) is now a binary tree, with each node \(i\) having two children
\(c_1\) and \(c_2\) and a parent \(p\). On the left of Figure~\ref{fig:bn-vw}, 
root \(\eql(r,7)\) has two children \(\eql(c_1,0)\), \(\eql(c_2,6)\) and parent \(\eql(p,2)\). 
We refer to such a jointree arrangement as a {\em jointree view.}

Jointree views simplify notation. For example, we can now write \(\sep(i)\) to denote
the separator between node \(i\) and its parent \(p\) instead of \(\sep(i,p)\). We will adopt this
simpler notation from now on.

We now compile an f-expression using the following equations:
\begin{eqnarray}
\P(Q) & = & \NC\ \Sum_{Q} \FF_h f(r) \label{eq:jt1} \\
f(i) & = &
\left\{
\begin{array}{ll}
\displaystyle \Sum_{\sep(i)} \FF_i & \mbox{\(i\) is leaf} \\
\displaystyle \Sum_{\sep(i)} f(c_1) f(c_2) & \mbox{\(i\) has children \(c_1,c_2\)}
\end{array}
 \right.  \label{eq:jt2}
\end{eqnarray}
Here, \(\FF_i\) is the product of factors assigned to leaf node \(i\) (CPT and evidence
indicator for the model variable assigned to node \(i\)).

For the jointree view in Figure~\ref{fig:bn-vw} (left), applying these equations to variable 
\(\eql(Q,D)\), host \(\eql(h,2)\) and root \(\eql(r,7)\) yields the f-expression:
\[
\scalemath{0.90}{
\P(D) = \NC \Sum_D  \FF_2 \Sum_{BC}  [\Sum_C \FF_0] [\Sum_{BC} [\Sum_{AC} \FF_4] [\Sum_{AB}[\Sum_{AB} \FF_3] [\Sum_A \FF_1]]].
}
\]
This expression results from applying Equation~\ref{eq:jt1} to the host \(\eql(h,2)\) followed by
applying Equation~\ref{eq:jt2} to each edge in the jointree. Each sum in the expression 
corresponds to a separator and every product constructed by the expression will be over the variables of a cluster. 

Our compiled AC is simply the above f-expression. The value of the expression represents the circuit output. 
The evidence indicators in the expression represent the circuit inputs. Finally, the CPTs of the expression 
contain the circuit parameters (see the AC in Figure~\ref{fig:ac}).

We will now introduce new notation to explain Equations~\ref{eq:jt1} and~\ref{eq:jt2} as we need this understanding in 
the following section; see also~\cite[Chapter 7]{Darwiche09}.
For node~\(i\) in a jointree view, we use \(\facsn(i)\) to denote the set of factors at or below node \(i\).
We also use \(\facsp(i)\) to denote the set of factors above node \(i\).
Consider node~\(6\) on the left of Figure~\ref{fig:bn-vw}. 
Then \(\facsn(6)\) contains the factors assigned to leaf nodes \(\{1,3,4\}\) and \(\facsp(6)\) contains
the factors assigned to leaf nodes \(\{0,2\}\). 

For a jointree view with host \(h\) and root \(r\),
\(\facsp(r)\facsn(r)\) contains all factors in the jointree and \(\facsp(r) = \FF_h\).
Equation~\ref{eq:jt1} computes \(\NC\ \Sum_{Q}\facsp(r)\facsn(r)\), while delegating
the computation of product \(\facsn(r)\) to Equation~\ref{eq:jt2}, which actually computes \(\Sum_{\sep(r)} \facsn(r)\) by
summing out all variables but for ones in \(\sep(r)\). The equation
uses the decomposition \(\facsn(i) = \facsn(c_1)\facsn(c_2)\) to sum out variables more aggressively:
\begin{eqnarray}
f(i) & = & \Sum_{\sep(i)} \facsn(i) = \Sum_{\sep(i)} \facsn(c_1)\facsn(c_2) \label{eq:dec}\\ \nonumber
& = & \Sum_{\sep(i)} \left(\Sum_{\sep(c_1)} \facsn(c_1)\right) \left(\Sum_{\sep(c_2)} \facsn(c_2)\right).
\end{eqnarray}
The rule employed by Equation~\ref{eq:jt2} is simple: sum out from product \(\facsn(i)\) 
all variables except ones appearing in product \(\facsp(i)\) (Theorem~\ref{theo:ve1}). 
The only variables shared between factors \(\facsn(i)\) and \(\facsp(i)\) are the ones in \(\sep(i)\) so 
Equation~\ref{eq:jt2} is exploiting Theorem~\ref{theo:ve1} to the max. 
The earlier that variables are summed out, the smaller the factors we need to multiply and the
smaller the f-expressions that VE compiles.


\subsection{Exploiting Functional Dependencies}
\label{sec:exploit-f}

\def\shrinka{{\sc shrink\_sep}}
\def\fsum{{\sc sum}}

\begin{figure}[tb]
\centering
\begin{tabular}{cc}
\hspace{-3.3mm}
\noindent\begin{minipage}[t]{.4\textwidth}
\begin{footnotesize}
\alglanguage{pseudocode}
\begin{algorithmic}[1]
\Procedure{shrink\_sep}{$r,h$}
\State \(X \gets\) variable assigned to host \(h\)
\If{\(X \in \fvars(r)\)}
\State \(\sep(r)\) -= \(\{X\}\) \label{ln:rs}
\EndIf
\State \Call{sum}{$r$}
\EndProcedure
\vspace{3mm}
\Procedure{sum}{$i$}
\If{leaf node \(i\)}
\State \Return
\EndIf
\State \(c_1, c_2 \gets\) children of node \(i\)
\State \(\X \gets \fvars(c_1)\cap\fvars(c_2)\) \label{ln:s}
\State \(c \gets\) either \(c_1\) or \(c_2\) \label{ln:choice}
\State \(\sep(c)\) -= \(\X\) \label{ln:drop}
\State \(\sep(c_1)\) \&= \(\sep(c_2) \cup \sep(i)\) \label{ln:p1}
\State \(\sep(c_2)\) \&= \(\sep(c_1) \cup \sep(i)\) \label{ln:p2}
\State \Call{sum}{$c_1$}
\State \Call{sum}{$c_2$}
\EndProcedure
\end{algorithmic}
\end{footnotesize}
\end{minipage} & 
\hspace{3mm}
\raisebox{-3cm}{\begin{minipage}{.43\textwidth}
\hspace{-3.2cm}\includegraphics[width=.5\textwidth]{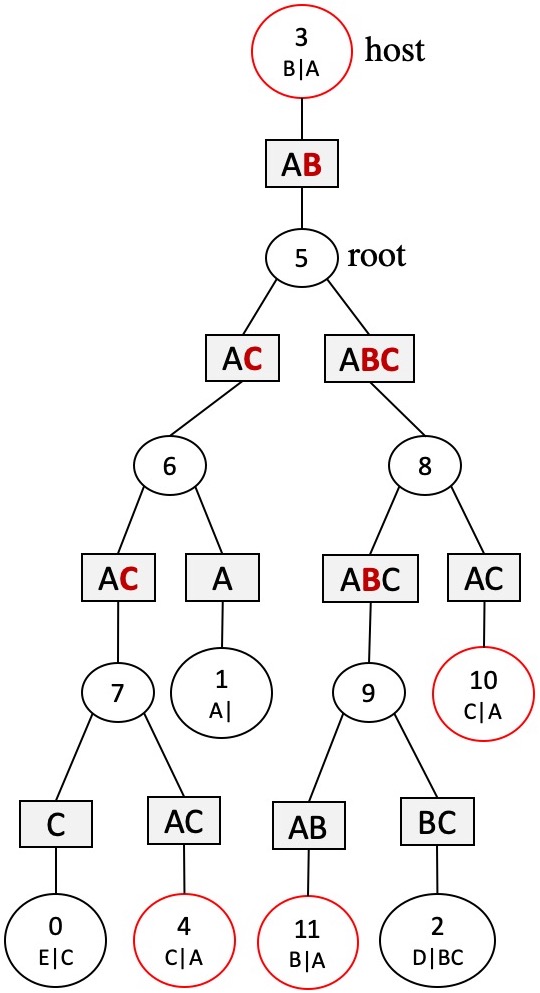}
\end{minipage}}
\end{tabular}
 \caption{{\bf Left:} Algorithm for shrinking separators based on functional CPTs.
 {\bf Right:} An application of the algorithm where dropped variables are colored red.
 Variables \(B\) and \(C\) have functional CPTs.
 \label{fig:alg}}
 
\end{figure}

We now present an algorithm that uses Theorems~\ref{theo:ve2} and~\ref{theo:ve3} 
to sum out variables earlier than is licensed by Theorems~\ref{theo:ve0} 
and~\ref{theo:ve1}. Here, `earlier' means lower in the jointree view which leads
to smaller factors. 

Our algorithm uses the notation \(\fvars(i)\) to denote the set of variables
that have a functional CPT at or below node \(i\) in the jointree view. For example,
in Figure~\ref{fig:alg}, we have \(\fvars(8) = \{B,C\}\), \(\fvars(11) = \{B\}\) and \(\fvars(2) = \{\}\).

The algorithm is depicted in Figure~\ref{fig:alg} and is a direct application of Theorem~\ref{theo:ve2}
with a few subtleties. The algorithm traverses the jointree view top-down, removing variables
from the separators of visited nodes. It is called on root \(r\) and host \(h\) of the view,
\shrinka($r,h$). It first shrinks the separator of root \(r\) which decomposes
the set of factors into \(\facsp(r)\facsn(r)\). The only functional CPT that can be shared between
factors \(\facsp(r)\) and \(\facsn(r)\) is the one for variable \(X\) assigned to host~\(h\).
If variable \(X\) is functional and its CPT is shared, Theorem~\ref{theo:ve2} immediately 
gives \(\facsp(r)\facsn(r) = \facsp(r)\sum_X\facsn(r)\). Variable \(X\) can then be summed 
at root \(r\) by dropping it from \(\sep(r)\) as done on line~\ref{ln:rs}.

The algorithm then recurses on the children of root \(r\). The algorithm processes both children
\(c_1\) and \(c_2\) of a node before it recurses on these children. This is critical as we explain
later. The set \(\X\) computed on line~\ref{ln:s} contains variables that have functional
CPTs in both factors \(\facsn(c_1)\) and factors \(\facsn(c_2)\) (recall Equation~\ref{eq:dec}). 
Theorem~\ref{theo:ve2} allows us to sum out these variables from either \(\facsn(c_1)\)
or \(\facsn(c_2)\) but not both, a choice that is made on line~\ref{ln:choice}. 
A variable that has a functional CPT in both \(\facsn(c_1)\) and \(\facsn(c_2)\) is summed out
from one of them by dropping it from either \(\sep(c_1)\) or \(\sep(c_2)\) on line~\ref{ln:drop}.
In our implementation, we heuristically choose a child based on the size of separators below it. We add
the sizes of these separators (number of instantiations) and choose the child with the largest 
size breaking ties arbitrarily. 

If a variable is summed out at node \(i\) and at its child \(c_2\), we can sum it out earlier at child \(c_1\) by
Theorem~\ref{theo:ve1} (classical VE):
\(\sum_X (\facsn(c_1)\sum_X\facsn(c_2)) = (\sum_X \facsn(c_1))(\sum_{X}\facsn(c_2))\).
A symmetric situation arrises for child \(c_2\). This is handled on lines~\ref{ln:p1}-\ref{ln:p2}. 
Applying Theorem~\ref{theo:ve1} in this context
demands that we process nodes \(c_1\) and \(c_2\) before we process their children. Otherwise,
the reduction of separators \(\sep(c_1)\) and \(\sep(c_2)\) will not propagate downwards early
enough, missing opportunities for applying Theorem~\ref{theo:ve1} further.

Figure~\ref{fig:alg} depicts an example of applying algorithm \shrinka\ to a jointree view
for the Bayesian network in Figure~\ref{fig:bn-vw}. Variables colored red are dropped
by \shrinka. The algorithm starts by processing
root \(r=5\), dropping variable \(B\) from \(\sep(5)\) on line~\ref{ln:rs}. It then processes
children \(c_1=6\) and \(c_2=8\) simultaneously. Since both children contain a functional
CPT for variable \(C\), the variable can be dropped from either \(\sep(6)\) or \(\sep(8)\).
Child \(c_2=8\) is chosen in this case and variable \(C\) is dropped from \(\sep(8)\).
We have \(\sep(6)=\{A,C\}\) and \(\sep(8)=\{A,B\}\) at this point. Lines~\ref{ln:p1}-\ref{ln:p2}
shrink these separators further to \(\sep(6)=\{A\}\) and \(\sep(8)=\{A\}\).

Our proposed technique for shrinking separators will have an effect only when functional CPTs 
have multiple occurrences in a jointree (otherwise, set \(\X\) on line~\ref{ln:s} is always empty). 
While this deviates from the standard use of jointrees, replicating functional CPTs is licensed 
by Theorem~\ref{theo:ve3}. 
The (heuristic) approach we adopted for replicating functional CPTs in a jointree is 
based on replicating them in the Bayesian network. Suppose variable \(X\) has a functional
CPT and children \(C_1, \ldots, C_n\) in the network, where \(n > 1\). We replace variable \(X\) 
with replicas \(X_1, \ldots, X_n\). Each replica \(X_i\) has a single child \(C_i\)
and the same parents as \(X\). We then construct a jointree for the resulting network
and finally replace each replica \(X_i\) by \(X\) in the jointree.
This creates \(n\) replicas of the functional CPT in the jointree.
Replicating functional CPTs leads to jointrees with more nodes, but smaller separators
and clusters as we shall see in Section~\ref{sec:studies}.

\section{Mapping ACs into Tensor Graphs}
\label{sec:tensors}

\def\pytac{{\sc PyTAC}}

We discuss next how we map ACs (symbolic f-expressions) into tensors graphs for
efficient inference and learning. Our implementation is part of the \pytac\ system
under development by the author. \pytac\ is built on top of TensorFlow 
and will be open sourced.

A {\em tensor} is a data structure for a multi-dimensional array. 
The {\em shape} of a tensor defines the array dimensions.
A tensor with shape \((2,2,3)\) has \(2 \times 2 \times 3\) {\em elements} or {\em entries.} 
The dimensions of a tensor are numbered and called {\em axes.}
The number of axes is the tensor {\em rank.}
Tensor computations can be organized into a {\em tensor graph:} a data
flow graph with nodes representing tensor operations. 
Tensors form the basis of many machine learning tools today.

A factor over variables \(X_1, \ldots, X_n\) can be represented by a tensor with rank \(n\) and
shape \((d_1, \ldots, d_n)\), where \(d_i\) is the cardinality of variable \(X_i\) (i.e., its number
of values). Factor operations can then be implemented using tensor operations, leading
to a few advantages. 
First, tensor operations are heavily optimized to take advantage of special instruction sets
and architectures  (on CPUs and GPUs) so they can be orders of magnitude faster than standard 
implementations of factor operations (even on laptops). 
Next, the elements of a tensor can be variables, allowing one to represent symbolic 
f-expressions, which is essential for mapping ACs into tensor graphs that can be trained. 
Finally, tools such as TensorFlow and PyTorch provide support for computing the partial derivates 
of a tensor graph with respect to tensor elements, and come with effective gradient 
descent algorithms for optimizing tensor graphs (and hence ACs). 
This is very useful for training ACs from labeled data as we do in Section~\ref{sec:studies}.

To map ACs (symbolic f-expressions) into tensor graphs, we need to implement
factor multiplication, summation and normalization.
Mapping factor summation and normalization into tensor operations is 
straightforward: summation has a corresponding tensor operation ({\sc tf.reduce\_sum}) 
and normalization can be implemented using tensor summation and division.
Factor multiplication does not have a corresponding tensor operation and leads to some 
complications.\footnote{Tensor multiplication is pointwise while 
factors are normally over different sets of variables. Hence,
multiplying the tensors corresponding to factors \(f(ABC)\) and \(g(BDE)\) does not yield 
the expected result. The simplest option is to use {\sc tf.einsum}, which
can perform factor multiplication if we pass it the string \mbox{``abc, bde --\(>\) abcde''}
(\url{https://www.tensorflow.org/api_docs/python/tf/einsum}).
We found this too inefficient though for extensive use as it performs too many tensor 
transpositions. One can also use the technique of {\em broadcasting}
by adding {\em trivial} dimensions to align tensors (\url{https://www.tensorflow.org/xla/broadcasting}),
but broadcasting has limited support in TensorFlow requiring tensors with small enough ranks.}

We bypassed these complications in the process of achieving something more ambitious.
Consider Equation~\ref{eq:jt2} which contains almost all multiplications performed by VE.
Factors \(f_1(c_1)\), \(f_2(c_2)\) and the result \(f(i)\) are over separators 
\(\sep(c_1)\), \(\sep(c_2)\) and \(\sep(i)\). This equation multiplies factors \(f_1\) and \(f_2\)
to yield a factor over variables \(\sep(c_1) \cup \sep(c_2)\) and then shrinks it by
summation into a factor over variables \(\sep(i)\). We wanted to avoid constructing
the larger factor before shrinking it. That is, we wanted to multiply-then-sum in one shot
as this can reduce the size of our tensor graphs significantly.\footnote{See a discussion
of this space issue in~\cite[Chapter 7]{Darwiche09}.} A key observation allows
this using standard tensor operations. 

\begin{figure*}
\centering
\includegraphics[scale=.105]{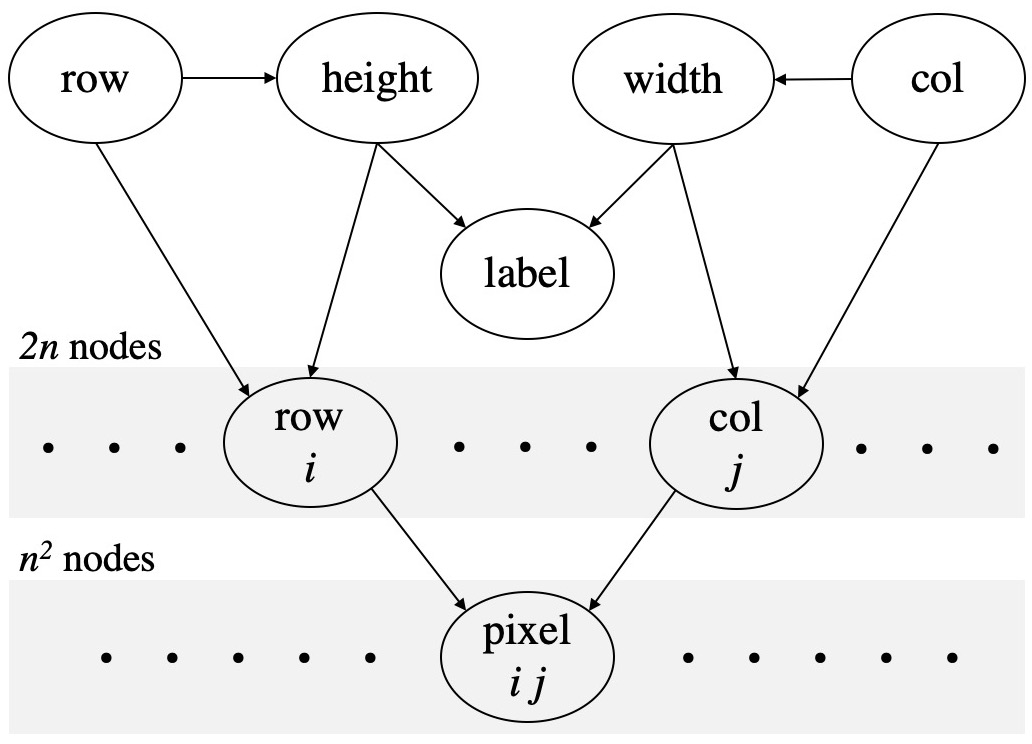}
\hspace{5mm}
\includegraphics[scale=.175]{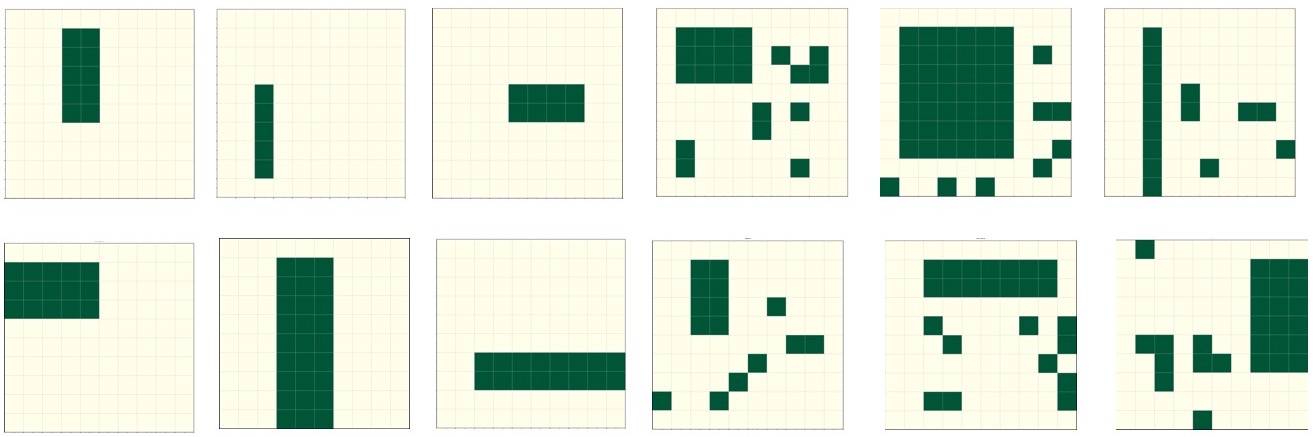}
\caption{{\bf Left:} A generative model for rectangles. {\bf Right:} Examples of clean and noisy rectangle images.
\label{fig:rect-model} \label{fig:rect-images}}
\centering
\includegraphics[scale=.1]{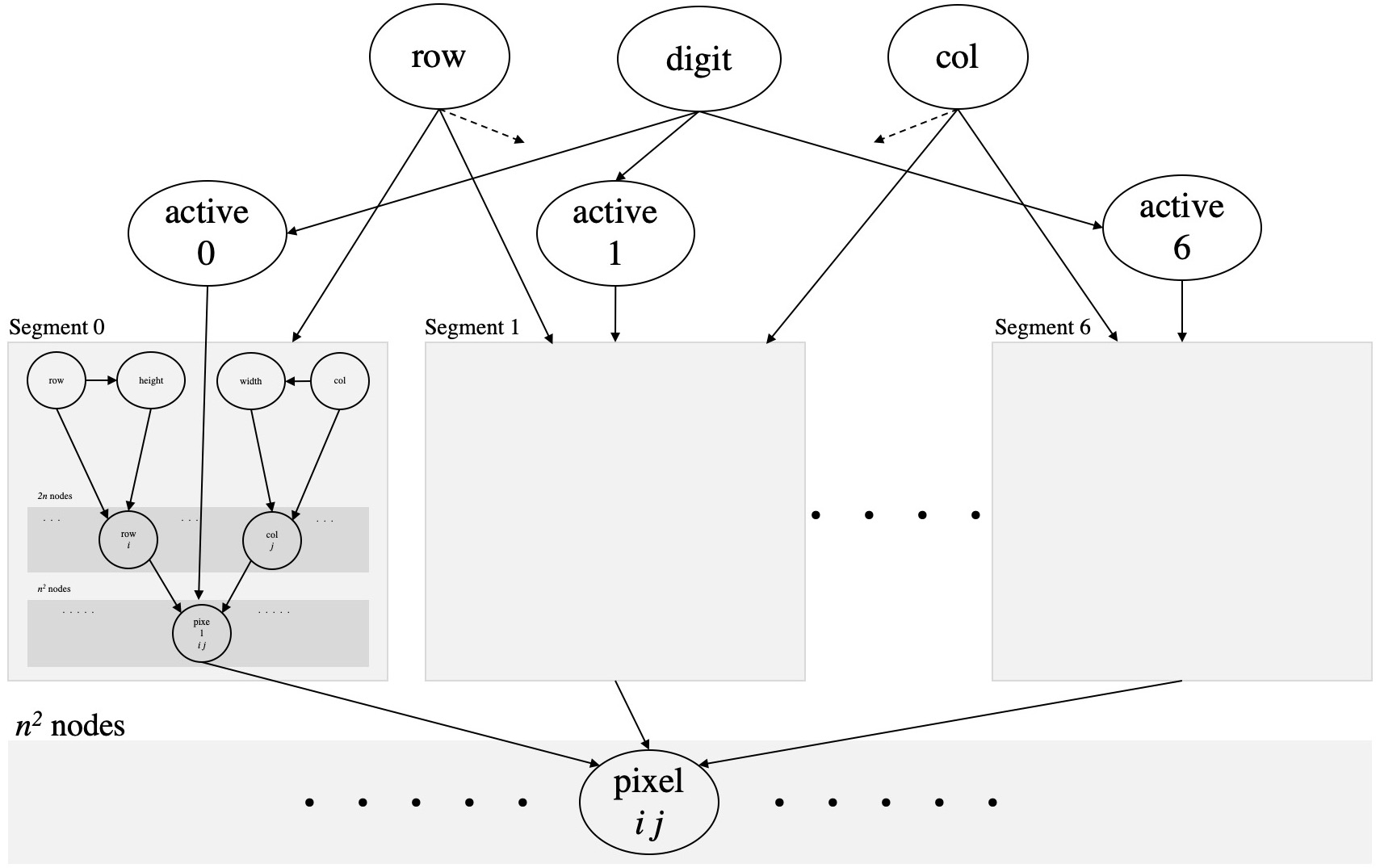}
\hspace{5mm}
\includegraphics[scale=.116]{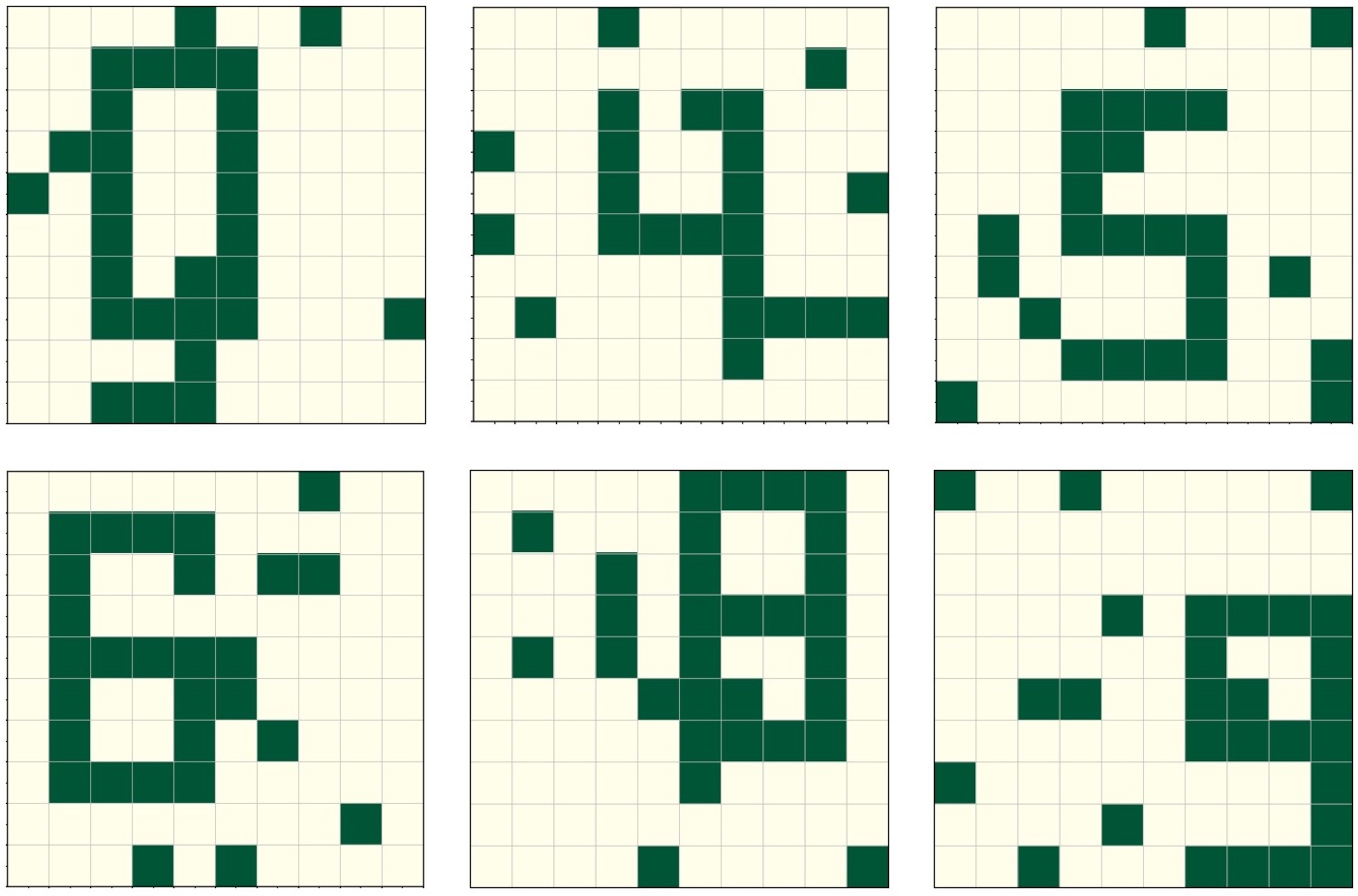}
\hspace{5mm}
\includegraphics[scale=.195]{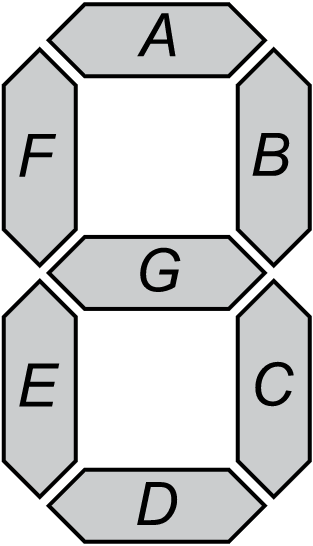}
\caption{{\bf Left:} A generative model for seven-segment digits. {\bf Middle:} Examples of noisy digit images. {\bf Right:} Seven-segment digit.
\label{fig:digits-model} \label{fig:digit-images}}
\end{figure*}

\def\S{{\bf S}}

The previous separators are all connected to jointree node \(i\) so they satisfy the following property~\cite[Chapter 9]{Darwiche09}:
If a variable appears in one separator, it also appears in at least one other separator.
Variables \(\sep(c_1)\cup\sep(c_2)\cup\sep(i)\) can then be partitioned as follows:\footnote{In a jointree,
every separator that is connected to a node is a subset of the union of other separators connected
to that node. Hence, \(\sep(i) \subseteq \sep(c_1)\cup\sep(c_2)\).}
\begin{itemize}
\item[] \(\C\): variables in \(f_1, f_2\) and \(f\), \(\sep(c_1)\cap\sep(c_2)\cap\sep(i)\)
\item[] \(\X\): variables in \(f_1, f\) but not \(f_2\), \((\sep(c_1)\cap\sep(i))\setminus\sep(c_2)\)
\item[] \(\Y\): variables in \(f_2, f\) but not \(f_1\), \((\sep(c_2)\cap\sep(i))\setminus\sep(c_1)\)
\item[] \(\S\): variables in \(f_1, f_2\) but not \(f\), \((\sep(c_1)\cap\sep(c_2))\setminus\sep(i)\)
\end{itemize}
where variables \(\S\) are the ones summed out by Equation~\ref{eq:jt2}.
The variables in each factor can now be structured as follows: \(f_1(\C,\X,\S)\), \(f_2(\C,\Y,\S)\)
and \(f(\C,\X,\Y)\). We actually group each set of variables \(\C, \X, \Y\) and \(\S\) into a single
compound variable so that factors \(f_1, f_2\) and \(f\) can each be represented by a rank-\(3\)
tensor. We then use the tensor operation for matrix multiplication {\sc tf.matmul}
to compute \(f = \sum_\S f_1 f_2\) in one shot, without having to construct a tensor 
for the product \(f_1 f_2\). Matrix multiplication is perhaps one of the most optimized
tensor operations on both CPUs and GPUs. 

Preparing tensors  \(f_1(\C,\X,\S)\) and \(f_2(\C,\Y,\S)\) for matrix 
multiplication requires two operations: {\sc tf.reshape} which aggregate variables into compound dimensions
and {\sc tf.transpose} which order the resulting
dimensions so {\sc tf.matmul} can map \(f_1\) and \(f_2\) into \(f(\C,\X,\Y)\). The common dimension
\(\C\) must appear first in \(f_1\) and \(f_2\). Moreover, the last two dimensions must
be ordered as \((\X,\S)\) and \((\S,\Y)\) but {\sc tf.matmul} can transpose the
last two dimensions of an input tensor on the fly if needed.
Using matrix multiplication in this fashion had a significant impact on reducing the size of tensor graphs and the efficiency 
of evaluating them, despite the added expense of using {\sc tf.transpose}
and {\sc tf.reshape} operations (the latter operation does not use space and  is very efficient).

\pytac\ represents ACs using an {\em abstract} tensor graph called an {\em ops graph,} which 
can be mapped into a particular tensor implementation depending on the used machine learning tool.
\pytac\ also has a dimension management utility, which associates each tensor with its 
structured dimensions while ensuring that all tensors are structured appropriately so operations 
can be applied to them efficiently. We currently map an ops graph into a {\sc tf.graph} object, using the
{\sc tf.function} utility introduced recently in TensorFlow 2.0.0.
\pytac\ also supports the recently introduced {\em Testing Arithmetic Circuits (TACs),} which
augment ACs with {\em testing units} that turns them into universal function approximators 
like neural networks~\cite{TBN19,DBLP:conf/pgm/ChoiD18,DBLP:conf/icml/ShenHCD19}.

\section{Case Studies}
\label{sec:studies}

We next evaluate the proposed VE algorithm on two classes of models that have abundant functional dependencies. 
We also evaluate the algorithm on randomly generated Bayesian networks while varying the amount of functional dependencies.
The binary jointrees constructed for these models are very large and prohibit inference using standard VE.
We constructed these binary jointrees from variable elimination orders 
using the method proposed in~\cite{DarwicheRC}; see also~\cite[Chapter 9]{Darwiche09}.
The elimination orders were obtained by the {\em minfill heuristic;} see, e.g.,~\cite{jointreeCONST}.\footnote{The minfill heuristic 
and similar ones aim for jointrees that minimize the size of largest cluster (i.e., treewidth). It was observed 
recently that minimizing the size of largest separator (called {\em max rank}) is more desirable when
using tensors since the memory requirements of Equation~\ref{eq:jt2} can
depend only on the size of separators not clusters (see~\cite{TensorOrder} for recent methods that optimize
max rank). This observation holds even
when using classical implementations of the jointree algorithm and was exploited earlier to reduce the memory requirements
of jointree inference; see, e.g.,~\cite{SSArch,RinaTimeSpace}.}

\subsection{Rectangle Model}
\label{sec:rectangle}

\def\Row{${\tt row}$}
\def\Col{${\tt col}$}
\def\iRow{${\tt row}_i$}
\def\jCol{${\tt col}_j$}
\def\Width{${\tt width}$}
\def\Height{${\tt height}$}
\def\Label{${\tt label}$}
\def\Pixel{${\tt pixel}_{ij}$}

We first consider a generative model for rectangles shown in Figure~\ref{fig:rect-model}. In an image
of size \(n \times n\), a rectangle is defined by its upper left corner (\Row, \Col), \Height\ and \Width. 
Each of these variables has \(n\) values.
The rectangle also has a binary \Label\ variable, which is either tall or wide. 
Each row has a binary variable \iRow\ indicating whether the rectangle will render in that row (\(n\) variables total).
Each column has a similar variable \jCol.
We also have \(n^2\) binary variables which correspond to image pixels (\Pixel) indicating
whether the pixel is on or off. This model can be used to predict rectangle attributes from noisy images 
such as those shown in Figure~\ref{fig:rect-images}. We use the model to predict whether a rectangle 
is tall or wide by compiling an AC with variable \Label\ as output and variables \Pixel\ as input. 
The AC computes a distribution on \Label\ given a noisy image as evidence and can be trained 
from labeled data using cross entropy as the loss function.\footnote{Arthur Choi suggested 
the use of rectangle models and Haiying Huang proposed this particular version of the model.}

Our focus is on the variables \iRow\ and \jCol\ which are determined by \Row/\Height\ 
and \Col/\Width, respectively (for example, \iRow\ is on iff \Row\ \(\leq i  <\) \Row + \Height). 
In particular, we will investigate the impact of these functional relationships 
on the efficiency of our VE compilation algorithm and their impact on learning AC parameters from labeled data. 
Our experiments were run on a MacBook Pro, 2.2 GHz Intel Core i7, with 32 GB RAM.

Table~\ref{tab:rect-size} depicts statistics on ACs that we compiled using our proposed VE algorithm.
For each image size, we compiled an AC for predicting the rectangle \Label\ while exploiting 
functional CPTs to remove variables from separators during the compilation process. 
As shown in the table, exploiting functional CPTs has a dramatic impact on the complexity
of VE. This is indicated by the size of largest jointree cluster (binary rank) in a classical jointree vs one whose separators
and clusters where shrunk due to functional dependencies.\footnote{We applied standard node and value 
pruning to the Bayesian network before computing a jointree and shrinking it. This has more effect
on the digits model in Section~\ref{sec:digits}. For example, it can infer that some pixels will never be
turned on as they will never be occupied by any digit.}
Recall that a factor over a cluster will have a size exponential in the cluster binary rank (the
same for factors over separators).
The table also shows the size of compiled ACs, which is the sum of tensor sizes in
the corresponding tensor graph (the tensor size is the number of elements/entries it has).
For a baseline, the AC obtained by standard VE (without exploiting functional CPTs) for an image
of size \(20 \times 20\) is $18,032,742,365$, which is about \(80\) times larger than
the size of AC reported in Table~\ref{tab:rect-size}. What is particularly impressive is the time
it takes to evaluate these ACs (compute their output from input). On average it takes about \(7\)
milliseconds to evaluate an AC of size ten million for these models, which shows the promise
tensor-based implementations (these experiments were run on a laptop).

We next investigate the impact of integrating background knowledge when learning AC parameters. 
For training, we generated labeled data for all clean images of rectangles and added \(n\) noisy images 
for each (with the same label). Noise is generated by randomly flipping
\(min(n,a-1,b/2)\) background pixels, where \(a\) is the number of rectangle 
pixels and \(b\) is the number of background pixels. We used the same process for testing data,
except that we increased the number of noisy pixels to \(min(2*n,a-1,b/2)\) and doubled the number 
of noisy images. We trained the AC using cross entropy as the loss function to minimize
the classification accuracy.\footnote{Some of the CPTs contain zero parameters but
are not functional, such as the ones for \Width\ and \Height. We fixed these zeros in the AC
when learning with background knowledge. We also tied the parameters of the \Pixel\ variables
therefore learning one CPT for all of them.}

Table~\ref{tab:rect-predict} shows the accuracy of classifying rectangles (tall vs wide) on \(10 \times 10\) 
images using ACs with and without background knowledge. 
ACs compiled from models with background knowledge have fewer parameters and therefore need less
data to train.
The training and testing examples were selected randomly from the datasets described above with
\(1000\) examples always used for testing, regardless of the training data size. 
Each classification accuracy is the average over twenty five runs.  The table clearly shows that integrating 
background knowledge into the compiled AC yields higher classification accuracies given 
a fixed number of training examples.

\begin{table}
\caption{Size and compile/evaluation time for ACs that compute the posterior on rectangle label. 
Reported times are in seconds. Evaluation time is the average of evaluating an AC over a batch of examples.}\smallskip
\centering
\resizebox{1.0\columnwidth}{!}{
\smallskip\begin{tabular}{|c|c|c|c|c|c|c|c|} \hline
Image & Functional &Network & \multicolumn{2}{c|}{Max Cluster Size} & AC & {\bf Eval}  & Compile  \\ 
\cline{4-5}
  Size   & CPTs & Nodes & rank & binary rank & Size & {\bf Time}  & Time \\ \hline
\multirow{2}{*}{\(8 \times 8\)}     
  & \xmark  & $85$ & $11$ & $15.0$ & \multirow{2}{*}{$926,778$} & \multirow{2}{*}{$.001$} & \multirow{2}{*}{$4.9$}  \\
  & \cmark & $197$ & $5$ & $ {\bf 13.0}$ & & & \\ \hline
\multirow{2}{*}{\(10 \times 10\) }  
  & \xmark  & $125$ & $13$ & $ 17.6$ & \multirow{2}{*}{$3,518,848$} & \multirow{2}{*}{$.003$} & \multirow{2}{*}{$2.9$} \\
  & \cmark & $305$ & $5$ & $ {\bf 14.3}$ & & & \\ \hline
\multirow{2}{*}{\(12 \times 12\)}  
  & \xmark  & $173$ & $15$ & $ 20.2$ & \multirow{2}{*}{$10,485,538$} & \multirow{2}{*}{$.007$} & \multirow{2}{*}{$4.1$}\\
  & \cmark & $437$ & $5$ & $ {\bf 15.3}$ & & & \\ \hline
\multirow{2}{*}{\(14 \times 14\)}  
  & \xmark  & $229$ & $17$ & $ 22.6$ & \multirow{2}{*}{$26,412,192$} & \multirow{2}{*}{$.018$} & \multirow{2}{*}{$5.7$}\\
  & \cmark & $593$ & $5$ & $ {\bf 16.2}$ & & & \\ \hline
\multirow{2}{*}{\(16 \times 16\) } 
  & \xmark  & $293$ & $19$ & $ 25.0$ & \multirow{2}{*}{$58,814,458$} & \multirow{2}{*}{$.034$} & \multirow{2}{*}{$7.4$}\\
  & \cmark & $773$ & $5$ & $ {\bf 17.0}$ & & & \\ \hline
  \multirow{2}{*}{\(20 \times 20\) } 
  & \xmark  & $445$ & $23$ & $ 29.6$ &  \multirow{2}{*}{$224,211,138$} & \multirow{2}{*}{$.140$} & \multirow{2}{*}{$14.1$}\\
  & \cmark & $1205$ & $5$ & $ {\bf 18.3}$ & & & \\ \hline
\end{tabular}
}
\label{tab:rect-size}

\caption{Classification accuracy on \(10\!\times\!10\) noisy rectangle images. Testing data included \(1000\) examples in each case.
}\smallskip
\centering
\resizebox{1.0\columnwidth}{!}{
\smallskip\begin{tabular}{|c|c||c|c|c|c|c|c||c|} \hline
Functional & \multirow{2}{*}{Accuracy} & \multicolumn{6}{c||}{Number of Training Examples} & Param \\
\cline{3-8} 
 CPTs &  & $25$ & $50$ & $100$ & $250$ & $500$ & $1000$ & Count \\ \hline
\multirow{2}{*}{fixed in AC}  
                  &  mean  &  $82.64$  &   $89.16$   &  $96.08$  & $97.92$  &  $99.51$   & $98.39$  & \multirow{2}{*}{$136$}  \\ 
                  &  stdev   &  $15.06$  &   $11.98$  &  $8.34$  &  $5.56$  &  $0.62$  &  $7.00$  & $$ \\ \hline
\multirow{2}{*}{trainable}
                  &   mean  &  $53.29$   &  $56.92$   & $62.20$  &  $74.62$  &  $ 84.94$  &  $88.69$ & \multirow{2}{*}{$4,428$} \\ 
                  &  stdev  &   $1.89$    &  $5.31$  &  $ 6.95$  &  $5.29$  &  $3.14$  &  $2.79$  & \\ \hline
\end{tabular}
}
\label{tab:rect-predict}
\end{table}

\begin{table}
\caption{Size and compile/evaluation time for ACs that compute a posterior over digits. Reported times are in seconds.
Evaluation time is the average of evaluating an AC over a batch of examples.}\smallskip
\centering
\resizebox{1.0\columnwidth}{!}{
\smallskip\begin{tabular}{|c|c|c|c|c|c|c|c|} \hline
Image & Functional &Network & \multicolumn{2}{c|}{Max Cluster Size} & AC & {\bf Eval}  & Compile  \\ 
\cline{4-5}
  Size   & CPTs & Nodes & rank & binary rank & Size & {\bf Time}  & Time \\ \hline
\multirow{2}{*}{\(8 \times 8\)}      
  & \xmark  & $638$ & $32$ & $33.3$ & \multirow{2}{*}{$264,357$} & \multirow{2}{*}{$.008$} & \multirow{2}{*}{$9.3$}  \\
  & \cmark & $1155$ & $9$ & $ {\bf 12.6}$ & & & \\ \hline
\multirow{2}{*}{\(10 \times 10\) }  
  & \xmark  & $954$ & $59$ & $ 60.8$ & \multirow{2}{*}{$2,241,205$} & \multirow{2}{*}{$.008$} & \multirow{2}{*}{$13.6$} \\
  & \cmark & $2173$ & $9$ & $ {\bf 14.1}$ & & & \\ \hline
\multirow{2}{*}{\(12 \times 12\)}  
  & \xmark  & $1334$ & $81$ & $ 83.8$ & \multirow{2}{*}{$11,625,558$} & \multirow{2}{*}{$.014$} & \multirow{2}{*}{$23.2$}\\
  & \cmark & $3469$ & $10$ & $ {\bf 16.7}$ & & & \\ \hline
\multirow{2}{*}{\(14 \times 14\)}  
  & \xmark  & $1778$ & $116$ & $ 121.0$ & \multirow{2}{*}{$32,057,227$} & \multirow{2}{*}{$.030$} & \multirow{2}{*}{$36.8$}\\
  & \cmark & $5007$ & $11$ & $ {\bf 18.4}$ & & & \\ \hline
\multirow{2}{*}{\(16 \times 16\) } 
  & \xmark  & $2286$ & $134$ & $ 140.0$ & \multirow{2}{*}{$95,094,167$} & \multirow{2}{*}{$.076$} & \multirow{2}{*}{$50.4$}\\
  & \cmark & $6825$ & $11$ & $ {\bf 19.3}$ & & & \\ \hline
\end{tabular}
}
\label{tab:digits-size}

\caption{Classification accuracy on \(10 \times 10\) noisy digit images. Testing data included \(1000\) examples in each case.}\smallskip
\centering
\resizebox{1.0\columnwidth}{!}{
\smallskip\begin{tabular}{|c|c||c|c|c|c|c|c||c|} \hline
Functional & \multirow{2}{*}{Accuracy} & \multicolumn{6}{c||}{Number of Training Examples} & Param \\
\cline{3-8} 
 CPTs &  & $25$ & $50$ & $100$ & $250$ & $500$ & $1000$ & Count \\ \hline
\multirow{2}{*}{fixed in AC}  
                  &  mean  &  $83.51$  &   $89.17$   &  $ 94.94$  & $97.68$  &  $98.49$   & $98.44$  & \multirow{2}{*}{$275$}  \\ 
                  &  stdev   &  $8.70$  &   $6.02$       &  $4.57$    &  $1.45$    &  $0.91$  &  $ 0.27$  & $$ \\ \hline
\multirow{2}{*}{trainable}
                  &   mean  &  $9.82$   &  $12.26$   & $13.28$  &  $22.36$  &  $29.51$  &  $35.67$ & \multirow{2}{*}{$22,797$} \\ 
                  &  stdev  &   $0.77$   &  $2.25$  &  $3.32$  &  $3.45$  &  $2.40$  &  $1.57$  & \\ \hline
\end{tabular}
}
\label{tab:digits-predict}
\end{table}

\subsection{Digits Model}
\label{sec:digits}

We next consider a generative model for seven-segment digits 
shown in Figure~\ref{fig:digits-model} (\url{https://en.wikipedia.org/wiki/Seven-segment_display}).
The main goal of this model is to recognize digits in noisy images such as those shown in Figure~\ref{fig:digit-images}.
The model has four vertical and three horizontal segments.
A digit is generated by activating some of the segments. For example, digit \(8\) is generated by activating all segments
and digit \(1\)  by activating two vertical segments. Segments are represented by rectangles as in the previous section,
so this model integrates seven rectangle models.
A digit has a location specified by the row and column of its upper-left corner (height is seven pixels and width is four pixels). Moreover,
each segment has an activation node which is turned on or off depending on the digit. When this activation node is off, 
segment pixels are also turned off. An image of size \(n \times n\) has \(n^2\) pixels whose
values are determined by the pixels generated by  segments. 

This is a much more complex and larger model than the rectangle model and also has an abundance of functional dependencies. 
It is also much more challenging computationally. This can be seen by examining Tables~\ref{tab:digits-size}, which
reports the size of largest clusters in the jointrees for this model. For example, the model for \(16 \times 16\) images
has a cluster with a binary rank of \(140\). This means that standard VE would have to construct a factor of size 
\(2^{140}\) which is impossible. Our proposed technique for exploiting functional dependencies makes this possible
though as it reduces the binary rank of largest cluster down to \(19.3\). And even though the corresponding AC
has size of about one hundred million, it can be evaluated in about \(76\) milliseconds. The AC compilation times
are also relatively modest.

We trained the compiled ACs as we did in the previous section.
We generated all clean images and added noise as follows.
For each clean image we added \(100\) noisy images for training and \(200\) for testing by
randomly flipping \(n\) background pixels where \(n\) is the image size.

Table~\ref{tab:digits-predict} parallels the one for the rectangle model. We trained two ACs, 
one that integrates background knowledge and one that does not. 
The former AC has fewer parameters and therefore requires less data to train.
While this is expected, it is still interesting to see how little data one needs to get reasonable accuracies.
In general, Tables~\ref{tab:digits-size} and~\ref{tab:digits-predict} reveal the same patterns of the rectangle model:
exploiting functional dependencies leads to a dramatic reduction in the AC size and
integrating background knowledge into the compiled AC significantly improves learnability. 

\subsection{Random Bayesian Networks}
\label{sec:random}

\begin{table}
\caption{Reduction in maximum cluster size due to exploiting functional dependencies.
The number of values a node has was chosen randomly from \((2,3)\).
We averaged over \(10\) random networks for each combination of network node count, maximal parent
count and the percentage of nodes having functional CPTs. The parents of a node
and their count where chosen randomly.
Functional nodes where chosen randomly from non-root nodes.
The {\em binary rank} of a cluster is {\em log2} of the number of its instantiations.}\smallskip
\centering
\resizebox{1,0\columnwidth}{!}{
\smallskip\begin{tabular}{|c|c|c||c|c||c|c||c|c|} \hline
Network & Maximal & Percentage & \multicolumn{6}{c|}{\bf Binary Rank of Largest Cluster}  \\
\cline{4-9}
Node & Parent & Functional & \multicolumn{2}{c||}{Original Jointree} & \multicolumn{2}{c||}{Shrunk Jointree} & \multicolumn{2}{c|}{Reduction}  \\
\cline{4-9} 
 Count &  Count & Nodes~\% & mean & stdev & mean & stdev & {\bf mean}  & stdev \\ \hline \hline
 \multirow{4}{*}{$75$}
           &    \multirow{4}{*}{$4$}
                         &   $25$   &   $22.4$  &   $2.8$   &   $19.4$   &   $3.1$ &   $\bf 3.0$  &  $1.7$  \\ \cline{3-9}
           &            &   $50$   &   $22.5$  &   $2.2$   &   $16.9$   &   $1.8$  &  $\bf 5.6$  &  $2.5$   \\ \cline{3-9}
           &            &   $67$   &   $22.9$  &   $3.9$   &   $13.1$   &   $2.3$  &  $\bf 9.8$  &  $3.4$   \\ \cline{3-9}
           &            &   $80$   &   $21.9$  &   $2.7$   &   $11.1$   &   $1.9$  &  $\bf 10.8$  &  $3.2$   \\ \hline \hline
 \multirow{4}{*}{$100$}
           &     \multirow{4}{*}{$5$}
                         &   $25$   &   $38.7$  &   $4.5$   &   $33.1$   &   $4.6$  &  $\bf 5.7$  &  $2.0$   \\ \cline{3-9}
           &            &   $50$   &   $38.1$  &   $2.9$   &   $23.7$   &   $3.3$  &  $\bf 14.4$  &  $4.3$   \\ \cline{3-9}
           &            &   $67$   &   $38.0$  &   $3.2$   &   $18.9$   &   $3.1$  &  $\bf 19.1$  &  $3.9$   \\ \cline{3-9}
           &            &   $80$   &   $36.8$  &   $3.0$   &   $13.5$   &   $2.5$  &  $\bf 23.3$  &  $3.1$   \\ \hline \hline
 \multirow{4}{*}{$150$}
           &    \multirow{4}{*}{$6$}
                         &   $25$   &   $64.3$  &   $5.4$   &   $54.2$   &   $4.4$  &  $\bf 10.1$  &  $4.2$   \\ \cline{3-9}
           &            &   $50$   &   $64.9$  &   $3.2$   &   $41.9$   &   $5.6$  &  $\bf 23.0$  &  $5.1$   \\ \cline{3-9}
           &            &   $67$   &   $64.3$  &   $6.0$   &   $28.2$   &   $4.2$  &  $\bf 36.0$  &  $4.7$   \\ \cline{3-9}
           &            &   $80$   &   $66.4$  &   $4.8$   &   $21.3$   &   $4.6$  &  $\bf 45.1$  &  $2.1$   \\ \hline 
\end{tabular}
}
\label{tab:reduce-tw}
\end{table}

\begin{table}
\caption{Comparing evaluation time of three  AC representations:
Tensor graph ({\bf TenG}), scalar graph ({\bf ScaG}) and 
scalar-batch graph ({\bf ScaBaG}). 
We averaged over \(10\)
random Bayesian networks for each combination of batch size and limit on circuit size. 
AC size limit is in millions of nodes.
The binary rank of a tensor is {\em log2} of the number of its entries. 
Maximum binary rank is for the largest tensor in the tensor graph. 
Normalized time (tensor graph) is evaluation time per one million AC nodes (a node is a tensor entry).
Each cell below contains the {\bf mean} (top) and {\bf stdev} (bottom).
Times are in milliseconds.
}\smallskip
\centering
\resizebox{1,0\columnwidth}{!}{
\smallskip\begin{tabular}{|c|c||c|c||c||c|c|} \hline
Batch & \multicolumn{3}{c||}{Tensor Graph (TenG)} & {\bf Milliseconds} & \multicolumn{2}{c|}{Slow Down Factor} \\ \cline{2-7}
Size & Limit on & Actual & Max Binary & {\bf TenG Time}  & ScaG / TenG & ScaBaG / TenG\\ 
        &  Size      & Size     & Rank          & {\bf Normalized} & Time Ratio & Time Ratio \\ \hline \hline
 \multirow{8}{*}{$1$}
  &  \multirow{2}{*}{$5$-$10$~M}
  &  $6,992,414$  &  $ 19.7$  &  $66.6$  &  $11.5$  &  $47.0$ \\
  &  & $1,830,909$  &  $0.7$     &  $20.6$  &  $4.6$    &  $20.3$  \\ \cline{2-7}
  &  \multirow{2}{*}{$15$-$20$~M}
  &   $17,979,799$  &  $21.2$  &  $34.3$  &  $22.2$  &  $82.1$ \\
  & &  $1,391,918$  &  $0.5$    &  $3.0$    &  $7.1$    &  $23.7$ \\ \cline{2-7}
  &  \multirow{2}{*}{$25$-$30$~M}
      &  $26,540,961$  &  $21.6$  &  $20.8$  &  $38.4$  &  $137.3$ \\
  & &  $1,154,660$  &  $ 0.5$  &  $4.6$  &  $14.6$  &  $56.1$ \\ \cline{2-7}
  &  \multirow{2}{*}{$35$-$40$~M}
      &  $37,058,914$  &  $21.8$  &  $16.0$  &  $50.3$  &  $177.2$ \\
  &  &  $1,349,479$  &  $0.4$  &  $3.5$  &  $32.1$  &  $128.7$ \\  \hline \hline
   \multirow{8}{*}{$10$}
  &  \multirow{2}{*}{$5$-$10$~M}
  &      $8,157,025$  &  $20.0$  &  $7.8$  &  $112.3$  &  $38.0$ \\
  &  &  $1,599,408$  &  $0.5$  &  $2.3$  &  $45.9$  &  $24.8$ \\ \cline{2-7}
  &  \multirow{2}{*}{$15$-$20$~M}
      &  $17,504,179$  &  $20.7$  &  $4.6$  &  $148.0$  &  $54.2$ \\
  &  &  $1,482,496$  &  $0.5$  &  $1.3$  &  $ 64.9$  &  $33.3$ \\ \cline{2-7}
  &  \multirow{2}{*}{$25$-$30$~M}
      &  $ 27,728,478$  &  $21.7$  &  $4.5$  &  $209.7$  &  $60.1$ \\
  &  &  $2,029,237$  &    $0.9$     &  $1.3$  &  $51.0$  &  $17.0$ \\ \cline{2-7}
  &  \multirow{2}{*}{$35$-$40$~M}
      &  $37,850,485$  &  $22.1$  &  $4.0$  &  $244.0$  &  $70.0$ \\
  &  &  $1,547,389$  &  $0.6$  &  $ 1.1$  &  $95.4$  &  $26.3$ \\ \hline \hline
   \multirow{8}{*}{$20$}
  &  \multirow{2}{*}{$5$-$10$~M}
      &  $6,506,125$  &  $19.6$  &  $4.9$  &  $135.3$  &  $26.9$ \\
  & &  $860,631$      &  $0.7$  &  $1.9$  &  $42.2$  &  $11.1$ \\ \cline{2-7}
  &  \multirow{2}{*}{$15$-$20$~M}
      &  $17,766,240$  &  $20.7$  &  $3.1$  &  $251.5$  &  $39.9$ \\
  &  &  $1,209,040$  &  $0.5$  &  $1.3$     &  $123.7$  &  $15.2$ \\ \cline{2-7}
  &  \multirow{2}{*}{$25$-$30$~M}
      &  $27,762,672$  &  $ 21.7$  &  $3.1$  &  $271.9$  &  $46.0$ \\
  &  &  $1,148,761$  &  $0.5$  &  $1.1$  &  $92.4$  &  $17.7$ \\ \cline{2-7}
  &  \multirow{2}{*}{$35$-$40$~M}
      &  $37,620,063$  &  $22.1$  &  $3.0$  &  $287.5$  &  $44.3$ \\
  &  &  $1,416,214$  &    $0.3$  &   $1.2$   &  $118.9$  &  $19.7$ \\ \hline
\end{tabular}
}
\label{tab:tensor-time}
\end{table}

We next present two experiments on randomly generated Bayesian networks.
The first experiment further evaluates our proposed algorithm for exploiting
functional dependencies. The second experiment reinforces our motivation for working
with dense representations of factors and the corresponding tensor-based 
implementations.\footnote{The experiments of this section were run on a server with dual Intel(R) 
Xeon E5-2670 CPUs running at 2.60GHz and 256GB RAM.
}

We generated Bayesian networks by starting with a linear order of nodes \(V_1, \ldots, V_n\) and
a maximum number of parents per node \(k\). For each node \(V_i\), we randomly determined
a number of parents \(\leq k\) and chose the parents randomly from the set \(V_1, \ldots, V_{i-1}\).
We then selected a fixed percentage \(f\) of non-root nodes and gave them functional CPTs,
where each node had cardinality two or three.

In the first experiment, we considered networks with different number of nodes \(n\), maximum
number of parents \(k\) and percentage of functional nodes \(f\). For each combination, we generated
\(10\) networks, computed a binary jointree and averaged the 
size of largest cluster. We then applied our algorithm for exploiting functional dependencies
and obtained a jointree with shrunk clusters and separators while also noting the size of largest
cluster.

Table~\ref{tab:reduce-tw} depicts our results, where we report the size of a largest cluster in terms of
its {\em binary rank:} log2 of its instantiations count.  As can
be seen from Table~\ref{tab:reduce-tw}, our algorithm leads to substantial reductions in binary
rank, where the reduction increases as the fraction of functional nodes increases. Recall that
our algorithm includes two heuristics: one for deciding how to replicate functional CPTs
when building a jointree and another corresponding to the choice on
Line~\ref{ln:choice} in Figure~\ref{fig:alg}. Table~\ref{tab:reduce-tw} provides some evidence
on the efficacy of these heuristics beyond the rectangle and digits case studies we discussed
earlier.

The second experiment compares classical and tensor-based implementations of ACs.
In a classical implementation, the AC is represented as a directed acyclic graph where root
nodes correspond to scalars and other nodes correspond to scalar arithmetic operations;
see Figure~\ref{fig:ac}.
We will call this the {\em scalar graph} representation.
In a tensor-based implementation, the AC is represented using a {\em tensor graph} where root 
nodes correspond to tensors (i.e., factors) and other nodes correspond to tensor operations 
(i.e., factor operations) as discussed in Section~\ref{sec:tensors}. The main benefit of
a tensor-based implementation is that tensor operations can be parallelized on CPUs
and GPUs (for example, \numpy\ and tools such as TensorFlow leverage Single 
Instruction Multiple Data (SIMD) parallelism on CPUs).\footnote{\url{https://en.wikipedia.org/wiki/SIMD}}

Before we present the results of this experiment, we need to discuss the notion
of a {\em batch} which is a set of AC input vectors. When learning the parameters of an
AC using gradient descent, the dataset or a subset of it can be viewed as a batch
so we would be interested in evaluating the AC on a batch. A scalar graph would need
to be evaluated on each input vector in a batch separately. However, when representing 
the AC as a tensor graph we can treat the batch as a tensor. This allows us to evaluate
the AC on a batch to yield a batch of marginals, which creates more opportunities for parallelism.

There is middle grounds though: a scalar graph with a batch that we shall call the
{\em scalar-batch graph.} This is a tensor graph except that
each tensor has two dimensions only: a batch dimension and a scalar dimension. For example,
if the batch has size \(b\), then a tensor will have shape \((b,1)\). In a scalar-batch
graph, each tensor is a set of scalars, one for each member of the batch (AC input vector). 

Scalar-batch graphs can be used in situations where a full tensor graph cannot be
used. This includes situations where the AC is compiled using techniques such as 
knowledge compilation, which produce ACs that cannot be cast in terms of tensor
operations. A scalar-batch graph can be used in this case to offer an opportunity
for parallelism, even if limited, especially when training the AC from data.

\begin{figure}[tb]
\small
\centering
\alglanguage{pseudocode}
\begin{algorithmic}[1]
\Procedure{evaluate\_scalar\_graph}{graph\_nodes}
\For{\(n\) in graph\_nodes:}
\State c1, c2 = n.child1, n.child2
\If{n.type == 'add'}
\State n.value = c1.value + c2.value
\ElsIf{n.type == 'mul'}
\State n.value = c1.value * c2.value
\ElsIf{n.type == 'div'}
\State n.value = c1.value / c2.value
\EndIf
\EndFor
\EndProcedure
\end{algorithmic}
\caption{Evaluating a scalar graph representation of an AC. The graph nodes
are topologically sorted so the children of a node are evaluated before the node
is evaluated. The evaluation
of a scalar-batch representation is similar except that node values
are \numpy\ ndarrays and \(+, *, /\) are ndarray (tensor) operations. \label{fig:scalarG}}
\end{figure}

Table~\ref{tab:tensor-time} compares the three discussed AC representations in terms
of their evaluation time, while varying the batch size and AC size. The tensor 
graph implementation is the one we discussed
in Section~\ref{sec:tensors} using TensorFlow. The scalar graph implementation
uses a Python list to store the DAG nodes (parents before children) and then
uses the pseudocode in Figure~\ref{fig:scalarG} to evaluate the DAG. We extract
the DAG from the tensor graph where each DAG node corresponds to a tensor entry. 
The scalar-batch graph is represented similarly to the scalar graph except that
members of the list are \numpy\ ndarrays of shape \((b,1)\) instead of scalars (we
found \numpy\ to be more efficient than TensorFlow for this task). 
The evaluation time for both scalar graphs and scalar-batch graphs are therefore 
based on benchmarking the code in Figure~\ref{fig:scalarG} (we only measure
the time of arithmetic operations, excluding setting evidence on root nodes
and other overhead). 

The networks in Table~\ref{tab:tensor-time} were generated randomly as in the previous experiment,
with \(100\) nodes and a maximum of \(5\) parents per node.
For each limit on the AC size, we kept generating Bayesian networks randomly
until we found \(10\) networks whose compilations yielded tensor graphs
within the given size limit.
The {\em tensor graph normalized time}
in Table~\ref{tab:tensor-time} is the total time to evaluate the graph divided
by the batch size, then divided again by the size of the graph over \(1000,000\).
Normalized time is then the average time for evaluating one million AC nodes (tensor entries)
and is meant to give a sense of speed independent of the batch and AC size.

We now have a number of observations on Table~\ref{tab:tensor-time}.
The tensor graph is faster than the scalar and scalar-batch
graphs in all cases and sometimes by two orders of magnitude. This can
be seen in the last two columns of Table~\ref{tab:tensor-time} which report
the evaluation times (whole batch) of scalar and scalar-batch graphs over the
evaluation time of tensor graph.
The gap between tensor and scalar graphs increases with the batch size
and with AC size as this means more opportunities to exploit parallelism
on two fronts that the scalar graph cannot take advantage of. 
The gap between the tensor and scalar-batch graphs
increases with AC size, but decreases with batch size. Increasing the AC 
size correlates with increasing the size of tensors (at least the largest
one in the fourth column) which creates more opportunities for exploiting parallelism that the scalar-batch
graph cannot exploit. However, increasing the batch size can be
exploited by both the tensor and scalar-batch graphs, therefore narrowing
the gap (\numpy\ appears to be exploiting the batch more effectively than TensorFlow).
The scalar graph is faster than the scalar-batch graph when the batch
size is \(1\), but otherwise is slower. This is to be expected as there
is no need for the extra overhead of \numpy\ ndarrays in this case.
We finally emphasize the absolute evaluation times for the tensor
graph, which amount to a few milliseconds per one million AC nodes (normalized time)
when the batch and AC size are large enough.
\section{Conclusion}
\label{sec:conclusion}
We presented new results on the algorithm of variable elimination that exploit
functional dependencies using dense factors, allowing one to 
benefit from tensor-based technologies for more efficient inference and learning.
We also presented case studies that show the promise of proposed techniques. 
In contrast to earlier approaches, the proposed one does not dependent on the identity 
of functional dependencies, only that they are present. This has further applications to exact inference
(exploiting inferred functional dependencies) and to approximate inference (treating CPTs with
extreme probabilities as functional CPTs) which we plan to pursue in future work.

\ack I wish to thank members
of the Automated Reasoning Group at UCLA who provided valuable motivation and feedback:
Arthur Choi, Yizou Chen, Haiying Huang and Jason Shen.
This work has been partially supported by grants from NSF IIS-1910317, 
ONR N00014-18-1-2561 and DARPA N66001-17-2-4032. 

\bibliography{bib/references,bib/references2,bib/refs,bib/refsnn}

\end{document}